\documentclass{article}

\PassOptionsToPackage{numbers, compress}{natbib}



\usepackage[final,nonatbib]{neurips_2023}


\usepackage[utf8]{inputenc} 
\usepackage[T1]{fontenc}    
\usepackage{hyperref}       
\usepackage{url}            
\usepackage{booktabs}       
\usepackage{amsfonts}       
\usepackage{nicefrac}       
\usepackage{microtype}      
\usepackage{xcolor}         

\usepackage{enumerate}
\usepackage{enumitem}

\usepackage{wrapfig}
\usepackage{subcaption}


\usepackage{amsmath,amsfonts,bm}
\usepackage{amssymb,amsthm}
\usepackage{mathtools}
\usepackage{algorithm,algorithmic}
\usepackage{natbib}
\usepackage{xcolor}
\usepackage{hyperref}
\usepackage{url}
\usepackage{multirow}
\usepackage{array}
\usepackage{cancel}
\usepackage{etoc}
\usepackage{pifont}
\usepackage[capitalize]{cleveref}

\crefname{proposition}{Proposition}{Propositions}
\crefname{theorem}{Theorem}{Theorems}
\crefname{lemma}{Lemma}{Lemmas}
\crefname{update_rule}{Update}{Updates}
\crefname{algorithm}{Algorithm}{Algorithms}
\crefname{figure}{Figure}{Figures}









\def\eqref#1{equation~\ref{#1}}









\def\1{\bm{1}}










\DeclareMathAlphabet{\mathsfit}{\encodingdefault}{\sfdefault}{m}{sl}
\SetMathAlphabet{\mathsfit}{bold}{\encodingdefault}{\sfdefault}{bx}{n}

\def\gA{{\mathcal{A}}}

\def\gR{{\mathcal{R}}}
\def\gS{{\mathcal{S}}}



\def\sR{{\mathbb{R}}}








\newcommand{\R}{\mathbb{R}}

\newcommand{\softmax}{\mathrm{softmax}}



\DeclareMathOperator*{\argmax}{arg\,max}
\DeclareMathOperator*{\argmin}{arg\,min}

\newtheorem{theorem}{Theorem}
\newtheorem{lemma}{Lemma}

\newtheorem{proposition}{Proposition}
\newtheorem{remark}{Remark}
\newtheorem{example}{Example}

\newtheorem{corollary}{Corollary}

\def\rvone{{\mathbf{1}}}
\def\rvzero{{\mathbf{0}}}

\def\identitymatrix{\mathbf{Id}}
\def\diagonalmatrix{\text{diag}}

\DeclareMathOperator*{\cov}{Cov}

\newlength\tocrulewidth
\setlength{\tocrulewidth}{1.5pt}

\title{Ordering-based Conditions for Global Convergence of Policy Gradient Methods}

%

\author{%
  Jincheng Mei \\
  Google DeepMind \\
  \texttt{jcmei@google.com} \\
  \And
  Bo Dai \\
  Google DeepMind \\
  \texttt{bodai@google.com} \\
  \And
  Alekh Agarwal \\
  Google Research \\
  \texttt{alekhagarwal@google.com} \\
  \AND
  Mohammad Ghavamzadeh \\
  Amazon\thanks{The work was done prior to joining Amazon, while the author was at Google Research.} \\
  \texttt{ghavamza@amazon.com} \\
  \And
  Csaba Szepesv{\' a}ri \\
  Google DeepMind \\
  University of Alberta \\
  \texttt{szepi@google.com} \\
  \And
  Dale Schuurmans \\
  Google DeepMind \\
  University of Alberta \\
  \texttt{daes@ualberta.ca}
}
\ifdefined\usebigfont

\usepackage{times}
\usepackage[fontsize=13pt]{scrextend}
\makeatletter
\@ifpackageloaded{geometry}{\AtBeginDocument{\newgeometry{letterpaper,left=1.56in,right=1.56in,top=1.71in,bottom=1.77in}}}{\usepackage[letterpaper,left=1.56in,right=1.56in,top=1.71in,bottom=1.77in]{geometry}}
\AtBeginDocument{\newgeometry{letterpaper,left=1.56in,right=1.56in,top=1.71in,bottom=1.77in}}
\makeatother
\else
\fi

\begin{document}

\maketitle

 \begin{abstract}

\if0
We make two major contributions to the global convergence of policy gradient optimization using function approximations. \textcolor{blue}{First}, we use examples to show the following unknown results and phenomena: \textbf{(i)} global convergence is achievable without realizability of policies or rewards for policy gradient (PG) based methods, including standard softmax PG and natural policy gradient (NPG); \textbf{(ii)} approximation error is not an appropriate quantity for characterizing global convergence of PG methods; \textbf{(iii)} global convergence conditions are algorithm dependent. Those results make the issues of utilizing approximation error concept in PG global convergence analysis clear and questionable. \textcolor{blue}{Second}, we study the above phenomena, an obtain the following novel results: \textbf{(i)} NPG achieves global convergence if and only if the projection preserves the optimal action's component. This largely extends the existing results using approximation error. \textbf{(ii)} Softmax PG achieves asymptotic global convergence with linearly realizable rewards. This resolves and open question left by existing work. \textbf{(iii)} Softmax PG achieves asymptotic global convergence with order preserving features, which goes beyond realizability of policies or rewards. Experimental results verify the theoretical findings.
\fi

We prove that, for finite-arm bandits with linear function approximation,
the global convergence of policy gradient (PG) methods depends on inter-related properties between the policy update and the representation.
\textcolor{blue}{First}, 
we establish a few key observations that frame the study:
\textbf{(i)} Global convergence can be achieved under linear function approximation without policy or reward realizability, both for the standard Softmax PG and natural policy gradient (NPG).
\textbf{(ii)} Approximation error is not a key quantity for characterizing global convergence in either algorithm.
\textbf{(iii)} The conditions on the representation that imply global convergence are different between these two algorithms.
Overall, these observations call into question approximation error as an appropriate quantity for characterizing the global convergence of PG methods under linear function approximation.
\textcolor{blue}{Second}, motivated by these observations, we establish new general results: 
\textbf{(i)} NPG with linear function approximation achieves global convergence \emph{if and only if} the projection of the reward  onto the representable space preserves
the optimal action's rank, a quantity that is not strongly related to approximation error.
\textbf{(ii)} The global convergence of Softmax PG occurs if the representation satisfies a non-domination condition and can preserve the ranking of rewards, which goes well beyond policy or reward realizability.
We provide experimental results to support these theoretical findings.


\end{abstract}

\setlength{\abovedisplayskip}{2pt}
\setlength{\abovedisplayshortskip}{2pt}
\setlength{\belowdisplayskip}{2pt}
\setlength{\belowdisplayshortskip}{2pt}
\setlength{\jot}{1pt}

\setlength{\floatsep}{1ex}
\setlength{\textfloatsep}{1ex}

\vspace{-2mm}\section{Introduction}
\label{sec:introduction}
\vspace{-2mm}
Policy gradient (PG) is a foundational concept in reinforcement learning (RL), centrally used in both policy-based and actor-critic methods \citep{sutton2000policy}. Despite the non-convexity of the policy optimization objective \citep{agarwal2021theory}, global convergence of PG methods has been recently established in the tabular case for standard configurations such as the softmax parameterization \citep{agarwal2021theory,mei2020global} and stochastic on-policy sampling \citep{mei2022role}. In practice, when an RL agent is faced with a problem with large state and/or action spaces, \emph{function approximation} is needed to generalize across related states and actions. The behavior of PG methods in these settings is relatively under-explored. In this paper, we study this question for the case of linear function approximation, and establish a surprising result that 
\begin{center}
\emph{the classical Softmax PG method converges whenever there exists an adequate linear function that ranks actions in the same order as the ground-truth reward function.}
\end{center}

Understanding the behavior of PG methods under function approximation is crucial for describing the behavior of RL in practice, since one rarely faces domains small enough to explicitly enumerate over states and actions in parameterizing the policy. 
It is well known that, standard Softmax PG converges to stationary points if a ``compatible'' function approximation is used \citep{sutton2000policy};
i.e., one that is able to exactly represent policy value functions. 
However, when exact policy values are non-realizable, ``approximation error'' is typically considered to be the key quantity for characterizing how well a function approximation captures relevant problem quantities, including transition dynamics, rewards and policy values. 
This paper shows that such an approximation error perspective is \emph{overly demanding} when attempting to characterize the conditions that lead to global convergence of PG methods.

Using the concept of approximation error, global convergence results for PG methods have been recently established in an additive form,
\begin{align}
\label{eq:standard_global_convergence_reesult_additive_approximation_error}
    \text{sub-optimality gap} \le \text{optimization error} + \text{approximation error},
\end{align}
implying that if the approximation error is small, a diminishing optimization error implies a small sub-optimality gap.
A representative result is the global convergence of natural policy gradient (NPG) \citep[Table 2]{agarwal2021theory}, where the optimization error will diminish as the algorithm updates.  
There have also been global convergence results for other PG variants under linear function approximation that follow a similar approximation error analysis  \citep{agarwal2020pc,cayci2021linear,chen2022finite,yuan2022linear,alfano2022linear,abbasi2019politex,abbasi2019exploration}. 
However, 
an additive bound like
\cref{eq:standard_global_convergence_reesult_additive_approximation_error}
 has the inherent weakness that the approximation error will never be zero if the function approximation is not able to perfectly represent the desired quantities.
 This prevents such a strategy from establishing global convergence in cases where the approximation error is non-zero but
 a PG method still reaches the best representable solution.

Therefore, in spite of this recent progress, using approximation error in PG global convergence with function approximations has left two major gaps in the literature.
\textcolor{red}{First}, it has not been investigated whether small approximation error is \emph{necessary} to achieve convergence to an optimal representable policy \citep{agarwal2021theory}, 
diverting attention from feature designs that achieve useful properties beyond small approximation error.
\textcolor{red}{Second}, it is not clear if standard Softmax PG (other than NPG) converges globally under small approximation errors. In particular, NPG contains a least squares regression step \citep[Eq. (17)]{agarwal2021theory} that can be naturally characterized with an approximation error quantity. However, standard Softmax PG does not have such a projection step \citep{sutton2000policy}, and the results in \citep{agarwal2021theory} do not apply to this update. Whether standard Softmax PG can achieve global convergence with even linearly realizable rewards (zero approximation error) is still an open problem.

In this paper, we address the above questions and contribute the following results. 
\textcolor{blue}{First}, we provide negative answers to questions on the role of approximation error in determining global convergence of PG methods:
\begin{itemize}[leftmargin=16pt, nosep]
    \item[\textbf{(i)}] Global convergence can be achieved under linear function approximation with non-zero approximation error, for both the standard Softmax PG and natural policy gradient (NPG) updates.
    \item[\textbf{(ii)}] Approximation error is not a key quantity for characterizing global convergence in either case.
    \item[\textbf{(iii)}] The conditions that imply global convergence are different between these two algorithms.
\end{itemize}
\textcolor{blue}{Second}, these results lead us to question whether approximation error is an appropriate quantity to consider the global convergence of PG methods under linear function approximation. 
We establish new general results that characterize the conditions for global convergence of PG methods: 
\begin{itemize}[leftmargin=16pt, nosep]
    \item[\textbf{(i)}] NPG with log-linear function approximation achieves global convergence if and only if the projection of the reward  onto the representable space preserves the optimal action's rank. This result significantly extends previous results that use approximation error in the analysis \citep{agarwal2021theory,agarwal2020pc}, since preserving the rank of the optimal action is not strongly related to approximation error (except in the realizable limit).
    \item[\textbf{(ii)}] We show that the global convergence of Softmax PG follows if the representation satisfies a non-domination condition and can preserve the ranking of rewards, which goes well beyond policy or reward realizability. As a byproduct, we resolve an open question by showing that even for linearly realizable reward function, Softmax PG cannot always converge to globally optimal policies when the non-domination condition for representation is violated.
\end{itemize}

\vspace{-2mm}
\section{Settings}
\vspace{-2mm}
We study the policy optimization problem under one state with $K$ actions. Given a reward vector $r \in \sR^K$, the problem is to find a parametric policy $\pi_\theta$ to maximize the expected reward,
\begin{align}
\label{eq:expected_reward_maximization}
    \sup_{\theta \in \sR^d} \pi_\theta^\top r,
\end{align}
where $\theta \in \sR^d$ with $d < K$ is the parameter, and $\pi_\theta = \softmax(X \theta)$ is called a ``log-linear policy'' \citep{agarwal2021theory,yuan2022linear} such that for all action $a \in [K] \coloneqq \{ 1, 2, \dots, K \}$,
\begin{align}
\label{eq:log_linear_policy}
    \pi_{\theta}(a) = \frac{ \exp\{ [X \theta](a) \} }{ \sum_{a^\prime \in [K]}{ \exp\{ [X \theta](a^\prime) \} } },
\end{align}
where $X \in \sR^{K \times d}$ is the feature matrix with full column rank $d < K$. There are two major difficulties with the policy optimization problem. \textcolor{red}{First}, \cref{eq:expected_reward_maximization} is a non-concave maximization w.r.t. $\theta$, due to the softmax transform \citep[Proposition 1]{mei2020global}. \textcolor{red}{Second}, the policy and reward can be unrealizable, in the sense that the parametric log-linear policy $\pi_\theta = \softmax(X \theta)$ cannot well approximate every policy $\pi$ in the $K$-dimensional probability simplex, and the score $X \theta \in \sR^K$ cannot well approximate the true mean reward $r \in \sR^K$.  
Such limitations arise in the linear function approximation case because $\pi_\theta$ and $X \theta$ are restricted to low-dimensional manifolds via $\theta \in \sR^d$ for $d < K$.

To solve \cref{eq:expected_reward_maximization}, we consider the standard Softmax PG \citep{sutton2000policy} and NPG \citep{kakade2002natural,agarwal2021theory} methods, shown in \cref{alg:softmax_policy_gradient,alg:natural_policy_gradient}. 
Softmax PG is an instance of gradient ascent, obtained by the chain rule,
\begin{align}
\label{eq:softmax_policy_gradient}
    \frac{d \, \pi_{\theta_t}^\top r}{d \theta_t} = \frac{d \, X \theta_t}{d \theta_t} \left( \frac{d \, \pi_{\theta_t}}{d \, X \theta_t} \right)^\top \frac{d \ \pi_{\theta_t}^\top r}{d \pi_{\theta_t}} = X^\top ( \diagonalmatrix{(\pi_{\theta_t})} - \pi_{\theta_t} \pi_{\theta_t}^\top ) \ r.
\end{align}
On the other hand, NPG conducts updates using least squares regression (i.e., projection),
\begin{align}
\label{eq:least_square_regression}
     \left( X^\top X \right)^{-1} X^\top r = \argmin_{w \in \sR^d}{ \left\| X w - r \right\|_2^2}.
\end{align}
As representative policy-based methods, in their general forms, Softmax PG and NPG lay the foundation for widely used RL methods, including REINFORCE \citep{williams1992simple}, actor-critic \citep{konda1999actor,bhatnagar2009natural,haarnoja2018soft}, TRPO and PPO \citep{schulman2015trust,schulman2017proximal}. The above \cref{eq:softmax_policy_gradient,eq:least_square_regression} are their updates applied to the one-state setting.

\begin{minipage}{0.46\textwidth}
\begin{algorithm}[H]
    \centering
    \caption{Softmax policy gradient (PG)}
    \label{alg:softmax_policy_gradient}
    \begin{algorithmic}
        \STATE {\bfseries Input:} Learning rate $\eta > 0$.
        \STATE {\bfseries Output:} Policies $\pi_{\theta_t} = \softmax(X \theta_t)$.
        \STATE Initialize parameter $\theta_1 \in \sR^d$.
        \WHILE{$t \ge 1$}
            \STATE $\theta_{t+1} \gets \theta_{t} + \eta \cdot X^\top ( \diagonalmatrix{(\pi_{\theta_t})} - \pi_{\theta_t} \pi_{\theta_t}^\top ) r$.
        \ENDWHILE
    \end{algorithmic}
\end{algorithm}
\end{minipage}
\hfill
\begin{minipage}{0.46\textwidth}
\begin{algorithm}[H]
    \centering
    \caption{Natural policy gradient (NPG)}
    \label{alg:natural_policy_gradient}
    \begin{algorithmic}
        \STATE {\bfseries Input:} Learning rate $\eta > 0$.
        \STATE {\bfseries Output:} Policies $\pi_{\theta_t} = \softmax(X \theta_t)$.
        \STATE Initialize parameter $\theta_1 \in \sR^d$.
        \WHILE{$t \ge 1$}
            \STATE $\theta_{t+1} \gets \theta_{t} + \eta \cdot ( X^\top X )^{-1} X^\top r$.
        \ENDWHILE
    \end{algorithmic}
\end{algorithm}
\end{minipage}

To understand the difficulty of the optimization problem in \cref{eq:expected_reward_maximization}, 
it is helpful to consider  previous work that has analyzed the convergence of PG methods. 

In the tabular setting, where $d = K$, $X = \identitymatrix$, and $\pi_\theta = \softmax(\theta)$ with $\theta \in \sR^K$, both the rewards and optimal policy can be arbitrarily well approximated.  
In this case it is known that NPG enjoys a $O(1/t)$ global convergence rate \citep[Table 1]{agarwal2021theory}, which has been recently improved to $O(e^{-c \cdot t})$ \citep{khodadadian2021linear,mei2022role,lan2023policy,xiao2022convergence}. 
For the case of function approximation,
such results have subsequently been extended to log-linear policies, where approximation error is used to characterize the projection step of \cref{eq:least_square_regression} \citep{agarwal2021theory,yuan2022linear}. In particular, NPG achieves the following sub-optimality gap for all $t\ge1$ \citep[Table 2]{agarwal2021theory}, 
\begin{align}
\label{eq:standard_npg_convergence_approximation_error}
    \left( \pi^* - \pi_{\theta_t} \right)^\top r \le c_1 / \sqrt{t} + c_2 \cdot \epsilon_{\text{approx}}, \qquad \left( c_1 > 0, \ c_2 > 0 \right)
\end{align}
where $c_1$ and $c_2$ are problem specific constants, $\pi^*$ is the globally optimal policy, $\pi_{\theta_t}$ is produced by NPG, 
and $\epsilon_{\text{approx}}$ is the approximation error, i.e., the minimum error with which the policy values can be approximated using the features \citep[Table 2]{agarwal2021theory}. The ``optimization error'' term $c_1 / \sqrt{t}$ in \cref{eq:standard_npg_convergence_approximation_error} has since been improved to $O(e^{- c_3 \cdot t})$ with $c_3 > 0$ in \citep{yuan2022linear,alfano2022linear}. 
Note that if $\epsilon_{\text{approx}}>0$ then \cref{eq:standard_npg_convergence_approximation_error} is insufficient for establishing $\pi_{\theta_t}^\top r \to r(a^*) \coloneqq \max_{a \in [K]}{ r(a) }$ as $t \to \infty$ even when such global convergence is achieved.

The understanding for the standard Softmax PG is even less clear. 
In the tabular case, it is known that Softmax PG achieves global convergence asymptotically, i.e., $\pi_{\theta_t}^\top r \to r(a^*)$ as $t \to \infty$ \citep{agarwal2021theory}, with an $O(1/t)$ rate of convergence that exhibits undesirable problem and initialization dependent constants \citep{mei2020escaping,li2021softmax}. 
Directly extending this global convergence result to the case of function approximation, i.e., log-linear policies, is impossible without any additional assumptions on the features, since there can be 
exponentially many sub-optimal local maxima in the worst case \citep{chen2019surrogate}. 
In fact, even with linearly realizable rewards (zero approximation error), whether standard Softmax PG achieves global convergence still remains unsolved \citep{agarwal2021theory}. 
One intuitive reason why this is a difficult result to establish is that standard Softmax PG uses the gradient \cref{eq:softmax_policy_gradient} rather than projection (regression) to perform updates,
which is less directly connected to the concept of approximation error.


\section{The Limitations of Approximation Error in Characterizing Convergence}
\label{sec:findings}

It is known that there exist representations $X \in \sR^{K \times d}$ with $d < K$ and $r \in \sR^K$ that create exponentially many sub-optimal local maxima in \cref{eq:expected_reward_maximization} \citep[Theorem 1]{chen2019surrogate}, which makes it impossible to ensure global convergence of PG methods without imposing any structure on the function approximation.
Before identifying specific conditions that ensure global convergence, we first
explain how approximation error cannot be a useful structural measure for this purpose, by demonstrating that zero approximation error is not a necessary condition for global convergence, and illustrating problem instances with comparable approximation error that render starkly different convergence behaviors across different PG methods. 
Specifically, we illustrate these points with a set of concrete scenarios, each with $4$ actions and $2$-dimensional feature vectors describing each action. Since $d < K$, not every policy can be expressed in these representations, hence the problem instances are unrealizable. 

\subsection{Global Convergence is Achievable with Non-zero Approximation Error}

The results of \citep[Theorem 1]{chen2019surrogate} do not imply that sub-optimal local maxima always appear, as shown in the following. 

\begin{example}
\label{eg:first_example}
$K = 4$, $d = 2$, $X^\top = \begin{bmatrix} 0 & -1 & 0 & 2 \ \vspace{0.5ex}\\
    -2 & 0 & 1 & 0 \ \end{bmatrix}$ and $r = \left(9, 8, 7, 6 \right)^\top$. The approximation error is $\epsilon_{\text{approx}} = \min\limits_{w \in \sR^d}{ \left\| X w - r \right\|_2} = \big\| X \left( X^\top X \right)^{-1} X^\top r - r \big\|_2 = \sqrt{202.6} \approx 14.2338$.
\end{example}
Note that the approximation error is larger than any sub-optimality gap, i.e., for any policy $\pi$,
\begin{align}
\label{eq:non_zero_approximation_error_first_example_2}
    \left( \pi^* - \pi \right)^\top r \le 3 < \epsilon_{\text{approx}},
\end{align}
hence the bound in \cref{eq:standard_npg_convergence_approximation_error} does not imply global convergence for NPG in this example.
Yet, despite the non-zero approximation error and the inability of existing results including \cref{eq:standard_npg_convergence_approximation_error} to establish global convergence on \cref{eg:first_example}, 
both \cref{alg:softmax_policy_gradient,alg:natural_policy_gradient} can be shown to reach a global maximum.



\begin{proposition}
\label{prop:softmax_pg_npg_global_convergence_first_example}
Denote $a^* \coloneqq \argmax_{a \in [K]}{ r(a) }$. With constant $\eta > 0$ and any initialization $\theta_1 \in \sR^d$, both \cref{alg:softmax_policy_gradient,alg:natural_policy_gradient} guarantee $\pi_{\theta_t}^\top r \to r(a^*)$ as $t \to \infty$ on \cref{eg:first_example}.
\end{proposition}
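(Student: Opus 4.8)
The plan is to treat the two updates separately; NPG yields to a direct computation, while Softmax PG needs a convergence-of-iterates argument. For NPG, the update increment $\eta (X^\top X)^{-1} X^\top r$ does not depend on $t$, so $\theta_t = \theta_1 + (t-1)\,\eta\, w^*$ with $w^* \coloneqq (X^\top X)^{-1} X^\top r$, and hence $X\theta_t = X\theta_1 + (t-1)\,\eta\, \hat r$ with $\hat r \coloneqq X w^*$ the projection of $r$ onto the column space of $X$. For this instance one computes $X^\top X = 5\,\identitymatrix$ and $X^\top r = (4,-11)^\top$, so $\hat r = \tfrac15 (22,-4,-11,8)^\top$, whose strictly largest coordinate is the first. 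Since $[X\theta_t](a^*) - [X\theta_t](a) = [X\theta_1](a^*) - [X\theta_1](a) + (t-1)\,\eta\,(\hat r(a^*)-\hat r(a)) \to +\infty$ for every $a \neq a^* = 1$, we get $\pi_{\theta_t}(a^*) \to 1$ and hence $\pi_{\theta_t}^\top r \to r(a^*)$, for any $\eta>0$ and any $\theta_1$.

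For Softmax PG, I would first invoke the standard ascent estimate: $\theta \mapsto \pi_\theta^\top r$ has a Lipschitz gradient, namely $X^\top(\diagonalmatrix(\pi_\theta) - \pi_\theta\pi_\theta^\top)r$ with $X,r$ fixed, so for a constant step size (small enough that the ascent lemma applies) the value $\pi_{\theta_t}^\top r$ is non-decreasing, converges to some $v^* \le r(a^*)$, and $\big\| X^\top(\diagonalmatrix(\pi_{\theta_t}) - \pi_{\theta_t}\pi_{\theta_t}^\top)r \big\| \to 0$. The two coordinates of this gradient are $g_1 \coloneqq \Cov_{\pi_{\theta_t}}(X_{\cdot 1},r)$ and $g_2 \coloneqq \Cov_{\pi_{\theta_t}}(X_{\cdot 2},r)$; since the two columns of $X$ are supported on the disjoint action sets $\{2,4\}$ and $\{1,3\}$, these have an explicit form that is linear in the $\pi_{\theta_t}(a)$ with coefficients $\bar r_t - 8,\ 6-\bar r_t$ (for $g_1$) and $\bar r_t - 9,\ 7-\bar r_t$ (for $g_2$), where $\bar r_t \coloneqq \pi_{\theta_t}^\top r$. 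Passing to subsequential limits of $\pi_{\theta_t}$ and using $\bar r_t \to v^*$ together with the vanishing of $g_1,g_2$, a short sign analysis forces $v^* \in \{6,7,8,9\}$ (for any other value of $v^*$ the four coefficients are eventually sign-definite, which is incompatible with the vanishing covariances), and in those four cases $\pi_{\theta_t}$ converges to the vertex $e_4$, $e_3$, $e_2$ or $e_1$ respectively; the last case is exactly the claim.

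The main obstacle is excluding the three suboptimal corners. For each $a \in \{2,3,4\}$ I would show the vertex $e_a$ (point mass on action $a$) is dynamically unstable: in the region of $\theta$-space where the score of action $a$ dominates, the increment of the log-ratio $\log\!\big(\pi_{\theta_t}(a^*)/\pi_{\theta_t}(a)\big)$ equals a fixed linear combination of $g_1$ and $g_2$ (for instance $g_1 - 2 g_2$ when $a = 2$), and, after substituting the explicit feature values — this is the step where the order-preserving / non-domination geometry of $X$ is used — this increment is strictly positive throughout a neighborhood of $e_a$. Hence $\pi_{\theta_t}(a^*)/\pi_{\theta_t}(a)$ cannot tend to $0$, contradicting $\pi_{\theta_t}\to e_a$, which together with the previous paragraph leaves only $v^* = r(a^*)$, i.e.\ $\pi_{\theta_t}^\top r \to r(a^*)$. (Equivalently, once the paper's general Softmax PG theorem is in place one only needs to verify that this $X$ preserves the reward order — e.g.\ $w = (-2,-2)^\top$ gives scores $(4,2,-2,-4)$ — and satisfies the non-domination condition, each feature vector being a vertex of the convex hull of the four.)
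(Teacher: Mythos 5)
Your NPG half is correct and is essentially the paper's own argument (the proof of \cref{thm:optimal_action_preserving_condition_npg}) unrolled on \cref{eg:first_example}: $X^\top X = 5\,\identitymatrix$, $\hat r = \tfrac15(22,-4,-11,8)^\top$ matches the paper's computation, and the linearly diverging score gaps give $\pi_{\theta_t}(1)\to 1$ for any $\eta>0$ and any $\theta_1$. Likewise, your parenthetical fallback for Softmax PG---verify non-domination and exhibit an order-preserving direction (your $w=(-2,-2)^\top$ works; the paper uses $w=(-1,-1)^\top$) and invoke \cref{thm:reward_order_preserving_condition_softmax_pg}---is exactly the paper's proof of the proposition. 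Your ascent-lemma/limit-point analysis (values forced into $\{6,7,8,9\}$, iterates approaching a vertex) is also sound and parallels \cref{lem:theta_t_l2_norm_approach_infinity_softmax_pg}.

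The gap is in your self-contained route for excluding the suboptimal corners. The claim that the increment of $\log\big(\pi_{\theta_t}(a^*)/\pi_{\theta_t}(a)\big)$ is strictly positive throughout a neighborhood of $e_a$ (within the region where action $a$'s score dominates) is false for $a=3$. Take $\theta=(\theta_1,\theta_2)$ with $\theta_1>0$ large and $\theta_2=10\theta_1$: then $\pi_\theta$ is arbitrarily close to $e_3$, action $3$'s score dominates, and the residual mass sits almost entirely on action $4$, since $\pi_\theta(4)/\pi_\theta(3)=e^{2\theta_1-\theta_2}\gg\pi_\theta(2)/\pi_\theta(3)=e^{-\theta_1-\theta_2}\gg\pi_\theta(1)/\pi_\theta(3)=e^{-3\theta_2}$. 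The relevant increment is $\eta\,(x_1-x_3)^\top g_t=\eta\,\big[\,6\pi_\theta(1)(9-\bar r)-3\pi_\theta(3)(7-\bar r)\,\big]$ with $\bar r\coloneqq\pi_\theta^\top r$; here $\bar r-7=2\pi_\theta(1)+\pi_\theta(2)-\pi_\theta(4)<0$, so the second term is strictly negative of order $\pi_\theta(4)$ while the first is of order $\pi_\theta(1)\ll\pi_\theta(4)$, and the increment is strictly negative. (Your check for $a=2$ survives only because realizable policies satisfy $\pi(1)\pi(3)^2=\pi(2)^2\pi(4)$, which keeps $\pi(4)$ negligible near $e_2$; no analogous constraint rescues the claim near $e_3$, and you never invoke the manifold anyway.) This troublesome configuration---the second-largest score belonging to an action whose reward is \emph{lower} than $r(a)$---is precisely ``Case 2'' in the paper's proof of \cref{thm:reward_order_preserving_condition_softmax_pg}, and it cannot be dismissed by a local sign condition. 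The paper instead argues globally along the trajectory: $w^\top\theta_t$ is non-decreasing by the order-preservation covariance inequality (\cref{lem:non_negative_covariance_between_vectors_with_same_order}), while $x_{a^-}^\top\theta_t$ is eventually strictly decreasing (taking $u=x_{a^-}$, which is where non-domination is used), so the bad action's score must drop below the optimal action's in finite time, after which the Case-1 argument (the value exceeds $r(a)$, contradicting monotone convergence of the value to $r(a)$) applies. Your sketch credits ``the order-preserving / non-domination geometry'' for a neighborhood sign-definiteness that in fact fails; that geometry has to enter through these monotone linear functionals of $\theta_t$, so as written the instability step does not go through and would need to be replaced by an argument of the paper's type (or by directly invoking \cref{thm:reward_order_preserving_condition_softmax_pg}, as in your fallback).
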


All proofs can be found in the appendix due to space limits.
The fact that Softmax PG achieves global convergence in \cref{eg:first_example} is much harder to establish than for NPG, 
since \cref{eq:softmax_policy_gradient} involves a complex non-linearity given the presence of the softmax, unlike the linear least squares \cref{eq:least_square_regression} used in NPG.
To illustrate the intuition behind \cref{prop:softmax_pg_npg_global_convergence_first_example}
we use a visualization of the optimization landscape.

\paragraph{Visualization.} 
A visualization of the optimization landscape of \cref{eg:first_example} is
shown in \cref{fig:examples_landsacpe}(a). 
The bottom two-dimensional plane is the parameter space $\sR^d$ where $d = 2$. 
For each $\theta \in \sR^d$, we calculate $\pi_\theta$ using \cref{eq:log_linear_policy} and $\pi_{\theta}^\top r$ using \cref{eq:expected_reward_maximization}, and use $\pi_\theta^\top r$ as the vertical axis value of $\theta$.

\begin{figure}[ht]
\vskip -0.1in
\begin{center}
\centerline{\includegraphics[width=0.85\linewidth]{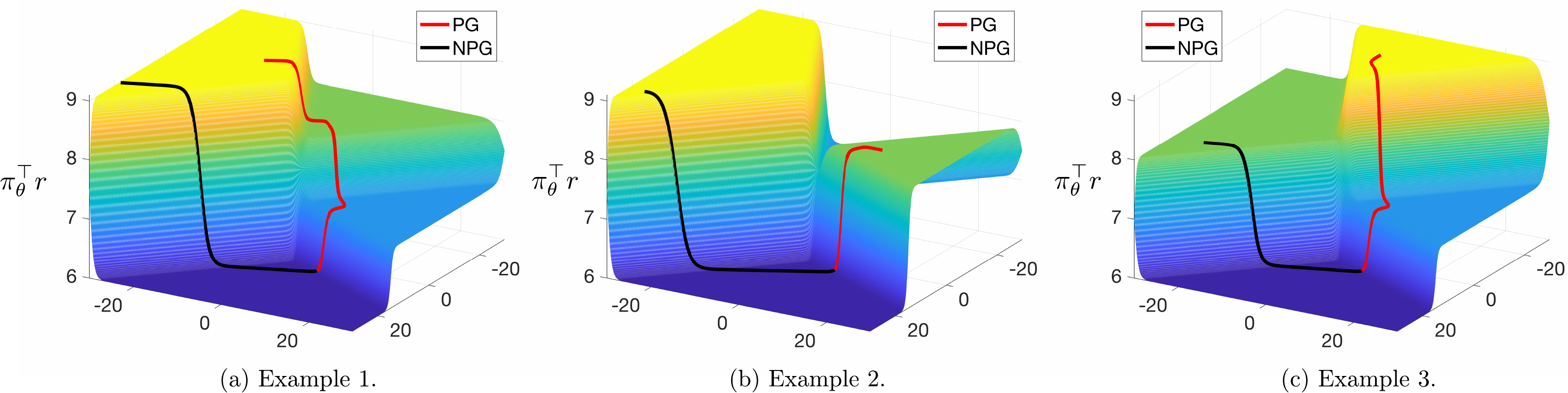}}
\caption{Visualizing the landscapes in the example problem instances.}
\vskip -0.2in
\label{fig:examples_landsacpe}
\end{center}
\vskip -0.1in
\end{figure}

To verify \cref{prop:softmax_pg_npg_global_convergence_first_example}, we run Softmax PG and NPG on \cref{eg:first_example} with the same $\theta_1 = (6, 8)^\top \in \sR^2$. In~\cref{fig:examples_landsacpe}(a), the optimization trajectories show $85$ iterations of NPG and $8.5 \times 10^6$ iterations of Softmax PG, both with learning rate $\eta = 0.2$. It can be clearly seen that both Softmax PG and NPG eventually achieve expected reward $\pi_{\theta_t}^\top r \to 9 = r(a^*)$, demonstrating global convergence (\cref{fig:simulations_verifications}(c) later shows that the sub-optimality gap $( \pi^* - \pi_{\theta_t} )^\top r$ approaches $0$).

In summary, \cref{eg:first_example} shows that both Softmax PG and NPG are able to achieve global convergence on unrealizable problem instances
with non-zero approximation error. This raises the question:
\vspace{-1mm}
\begin{center}
    \emph{Is non-zero approximation error useful for characterizing global convergence?}
\end{center}

\vspace{-2mm}
\subsection{Global Convergence is Irrelevant to Non-zero Approximation Error}

We answer the above question negatively. 
By comparing alternative problem instances with similar approximation errors but different convergence behaviors, we illustrate how approximation error is not able to distinguish between scenarios where global versus local convergence is obtained.

\begin{example}
\label{eg:second_example}
$K = 4$, $d = 2$, $X^\top = \begin{bmatrix} 0 & 0 & -1 & 2 \ \vspace{0.5ex}\\
    -2 & 1 & 0 & 0 \ \end{bmatrix}  \in \sR^{d \times K}$, and $r = \left(9, 8, 7, 6 \right)^\top \in \sR^K$. The approximation error is $\big\| X \left( X^\top X \right)^{-1} X^\top r - r \big\|_2 = \sqrt{205} \approx 14.3178$.
\end{example}
The only difference between \cref{eg:first_example,eg:second_example}
is that the second and third columns of $X^\top$ have been exchanged. 
The approximation error remains similar to that of \cref{eg:first_example}. 
Using the upper bound of \cref{eq:standard_npg_convergence_approximation_error}, one might therefore expect similar sub-optimality gaps $\left( \pi^* - \pi_{\theta_t} \right)^\top r$ to be demonstrated by the algorithms, 
since the r.h.s. contains similar approximation errors. 
However, as shown in \cref{fig:examples_landsacpe}(b), using the same initialization and learning rate, Softmax PG obtains $\pi_{\theta_t}^\top r \to 8 = r(2) < r(a^*)$ as it converges to a sub-optimal deterministic policy,
while NPG continues to succeed.

Lest one believe that NPG is globally convergent, the following example, where the first and second columns of $X^\top$ are swapped,
illustrate an analogous failure for NPG but not Softmax PG.

\begin{example}
\label{eg:third_example}
$K = 4$, $d = 2$, $X^\top = \begin{bmatrix} -1 & 0 & 0 & 2 \ \vspace{0.5ex}\\
    0 & -2 & 1 & 0 \ \end{bmatrix}  \in \sR^{d \times K}$, and $r = \left(9, 8, 7, 6 \right)^\top \in \sR^K$. The approximation error is $\big\| X \left( X^\top X \right)^{-1} X^\top r - r \big\|_2 = \sqrt{212} \approx 14.5602$.
\end{example}
Here again the approximation error is close to that of \cref{eg:first_example}. 
Yet, \cref{fig:examples_landsacpe}(c) shows that NPG achieves $\pi_{\theta_t}^\top r \to 8 < r(a^*)$ as it converges to a sub-optimal solution,
while Softmax PG succeeds. 

In summary, the 
Examples 1, 2 and 3 all 
have similar approximation errors, 
yet Softmax PG achieves global convergence on \cref{eg:first_example} but reaches a bad local maxima on \cref{eg:second_example}, while NPG succeeds on \cref{eg:first_example} and fails on \cref{eg:third_example}.
Note that these examples can be re-scaled to have exactly the same approximation errors while demonstrating the same convergence behavior of the algorithms.
From these findings we conclude that, if there is any quantity that can predict whether global versus local convergence is obtained by Softmax PG or NPG,  
that the quantity cannot be approximation error alone. 
This motivates to investigate the question:
what is the right quantity to characterize global convergence for unrealizable problems?

\subsection{Global Convergence Characterization is Algorithm Dependent}

We make one more key point.
From \cref{fig:examples_landsacpe}(b) and \cref{fig:examples_landsacpe}(c), NPG achieves global convergence on \cref{eg:second_example} but fails on \cref{eg:third_example}, while, conversely, Softmax PG succeeds on \cref{eg:third_example} and fails on \cref{eg:second_example}. 
This difference indicates that whatever condition characterizes global convergence, it must be \emph{algorithm dependent}, even for the closely related algorithms Softmax PG and NPG. 
%
%
Therefore, one has to study the conditions for Softmax PG and NPG \textbf{respectively} (rather than one condition for both algorithms), which motivates the refined question:
\begin{center}
    \emph{What conditions characterize global convergence of Softmax PG and NPG in unrealizable problems?}
\end{center}

\section{New Characterizations of Global Convergence for PG Methods}

From these observations, 
it is clear that whatever quantity characterizes the global convergence of PG methods, it cannot be based solely on approximation error and it must be algorithm dependent.
Therefore, we study distinct global convergence conditions for Softmax PG and NPG respectively.

\subsection{Reward Order Preservation with Adequate Features is Sufficient for PG Convergence}

We now investigate a global convergence condition for Softmax PG under log-linear policies. 

\paragraph{Intuition.} Consider \cref{eg:first_example}, where Softmax PG achieves global convergence. From the landscape shown in \cref{fig:examples_landsacpe}(a), there appears to be a monotonic path from any initialization point that allows gradient ascent to reach the optimal plateau with reward $r(a^*) = 9$. Intuitively, this arises because the actions' rewards seem to be nicely ``ordered''. 
For example, starting from $\theta_1 = (6,8)^\top \in \sR^d$ such that $\pi_{\theta_1}^\top r \approx 6$, 
Softmax PG is able to improve its expected reward eventually to $\pi_{\theta_t}^\top r \approx 7$, since there exists a sub-optimal plateau with a higher reward $7$ right beside the lowest plateau with reward $6$. Next, Softmax PG continues to improve its expected reward eventually to $\pi_{\theta_t}^\top r \approx 8$ by ``climbing'' toward another neighboring plateau with a higher reward. Finally, this process ends with Softmax PG successfully arriving at the optimal plateau with reward $r(a^*) = 9$. 

By contrast, in \cref{eg:second_example}, as shown in \cref{fig:examples_landsacpe}(b), Softmax PG gets stuck on a bad plateau with a local maximum reward of $8$. Visually, Softmax PG stops improving its expected reward on this sub-optimal plateau, because it is ``surrounded'' by two lower plateaus with rewards $6$ and $7$, which breaks the nice ``ordering'' of the expected reward landscape and traps the gradient ascent trajectory on a sub-optimal plateau from which there is no monotonic ascent to global optimality.

\paragraph{Verifying reward order preservation.} 
Based on the above intuition and observations, we conjecture that the ordering structure between the different rewards is a key property behind the global convergence of Softmax PG.
We can verify this conjecture in each of the \cref{eg:first_example,eg:second_example,eg:third_example}
by determining whether the feature matrix $X \in \sR^{K \times d}$ allows the same action ordering as the reward vector $r\in \sR^K$ to be realized.
For \cref{eg:first_example}, note that with $w = (-1, -1)^\top \in \sR^d$, we have
\begin{align}
\label{eq:order_preserving_calculation_first_example_1}
    r^\prime \coloneqq X w = \left(2, 1, -1, -2 \right)^\top \in \sR^K,
\end{align}
which preserves the ordering of $r \in \sR^K$, such that for all $i, j \in [K]$, $r(i) > r(j)$ if and only if $r^\prime(i) > r^\prime(j)$. 
Similarly, for \cref{eg:third_example}, if we let $w = (-3, -1)^\top$ then we have $r^\prime \coloneqq X w = \left(3, 2, -1, -6 \right)^\top$, which also preserves the order of $r$ over actions.
Softmax PG converges to a globally optimal reward in both of these examples.

By contrast, for \cref{eg:second_example}, it is impossible to find any $w \in \sR^d$ such that $X w$ preserves the order of the rewards $r$. 
To see why, consider any $w = (w(1), w(2))^\top$ and note that
\begin{align}
\label{eq:order_preserving_calculation_second_example_1}
    r^\prime \coloneqq X w = \left( -2 \cdot w(2), w(2), - w(1), 2 \cdot w(1) \right)^\top.
\end{align}
To preserve the reward order, we require both $-2 \cdot w(2) > w(2)$ (which would imply $w(2) < 0$) and $- w(1) > 2 \cdot w(1)$ (which would imply $w(1) < 0$), 
but these two conditions imply $w(2) < 0 < - w(1)$, which must reverse the order of the second and third actions.
This is an example where PG can fail to reach a global optimum.

\paragraph{Main Softmax PG result.} 
We formalize the above intuition by proving the following main result, which establishes that reward order preservation with adequate representations is a sufficient condition for the global convergence of Softmax PG under log-linear function approximation.

\begin{theorem}[Reward order preservation, non-domination features]
\label{thm:reward_order_preserving_condition_softmax_pg}
Given any reward $r \in \sR^K$ and feature matrix $X \in \sR^{K \times d}$. Denote $x_i \in \sR^{d}$ as the $i$-th row vector of $X$. If \textbf{(i)} $x_i^\top x_i > x_i^\top x_j$ for all $j \not= i$, and \textbf{(ii)} there exists at least one $w \in \sR^d$, s.t., $r^\prime \coloneqq X w$ preserves the order of $r$, i.e., for all $i, j \in [K]$, $r(i) > r(j)$ if and only if $r^\prime(i) > r^\prime(j)$, then for any initialization $\theta_1 \in \sR^d$, \cref{alg:softmax_policy_gradient} with a constant learning rate $\eta > 0$ achieves
global convergence of $\pi_{\theta_t}^\top r \to r(a^*)$ as $t \to \infty$.
\end{theorem}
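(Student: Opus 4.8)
## Proof Proposal

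The plan is to track the evolution of the probability mass on each action and show that, under the two hypotheses, the dynamics of Softmax PG cannot stall at a suboptimal plateau. Write $r' = Xw$ for a fixed order-preserving vector guaranteed by hypothesis (ii), and order the actions so that $r(1) > r(2) > \cdots > r(K)$ (ties can be handled with minor bookkeeping), hence also $r'(1) > \cdots > r'(K)$. The key quantity to monitor is $\theta_t^\top$ acting through the scores $s_t := X\theta_t \in \sR^K$, together with $\langle r', \text{(PG update)} \rangle$. Concretely, the PG update \cref{eq:softmax_policy_gradient} gives $\theta_{t+1} - \theta_t = \eta X^\top H(\pi_{\theta_t}) r$ with $H(\pi) = \diagonalmatrix(\pi) - \pi\pi^\top$, so the change in the $r'$-projected score is $\langle r', X(\theta_{t+1}-\theta_t)\rangle = \eta\, (r')^\top X X^\top H(\pi_{\theta_t}) r$. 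I would aim to show this inner product is a useful ``progress'' measure that is nonnegative (or controls suboptimality), using hypothesis (i): the condition $x_i^\top x_i > x_i^\top x_j$ says the Gram matrix $XX^\top$ is diagonally dominant in a row-wise sense, which should let one compare $XX^\top H(\pi) r$ favorably against $H(\pi) r$ and relate the score dynamics to the ``true'' softmax PG dynamics in the tabular case.

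The main structural step is to establish a \emph{consistency-of-ordering invariant}: if the scores $s_t$ rank the actions consistently with $r$ (i.e. $s_t(i) > s_t(j)$ whenever $r(i) > r(j)$) then this is preserved under the update and, moreover, the gap on the top action grows. For this I would compute, for a pair $i < j$, the evolution of $s_{t+1}(i) - s_{t+1}(j) = s_t(i) - s_t(j) + \eta\, (x_i - x_j)^\top X^\top H(\pi_{\theta_t}) r$ and show the increment is positive when $r(i) > r(j)$, using hypothesis (i) to control the cross terms $(x_i - x_j)^\top x_k$. The non-domination condition is exactly what prevents a ``wrong'' action's score from being dragged up faster than a better action's; this mirrors the role of the analogous condition in the counterexample discussion after \cref{eg:second_example}. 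Once ordering consistency is an invariant on a forward-invariant region reachable from any initialization (this last point may require an initial ``burn-in'' argument showing the scores enter the good region in finitely many steps, or a more careful potential that does not presuppose correct ordering), standard tabular-style Łojasiewicz / gradient-domination arguments (as in \citep{mei2020global,agarwal2021theory}) give $\pi_{\theta_t}(a^*) \to 1$ and hence $\pi_{\theta_t}^\top r \to r(a^*)$.

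A cleaner alternative I would pursue in parallel: use the order-preserving $r'$ to define a surrogate objective $\pi_\theta^\top r'$ and relate its gradient flow to that of $\pi_\theta^\top r$. Because $r'$ lies in the column space of $X$, $\pi_\theta^\top r'$ behaves like a realizable problem, for which softmax PG gradient-dominance machinery applies directly and forces $\pi_{\theta_t}(a^*)\to 1$ (the argmax of $r'$ equals $a^*$). The work is then to show that ascent on $\pi_\theta^\top r$ does not diverge from ascent on $\pi_\theta^\top r'$ — i.e. that the two gradients have positive inner product along the trajectory, or that monotone improvement in $\pi^\top r$ plus the ordering structure pins down the limit set to $\{e_{a^*}\}$. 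Monotonicity of $\pi_{\theta_t}^\top r$ for small-enough constant $\eta$ follows from smoothness of the objective (a standard descent-lemma argument), so the limit exists; the crux is ruling out suboptimal fixed points, where hypotheses (i) and (ii) jointly must force the stationary policy to be $e_{a^*}$.

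The hard part, I expect, is handling arbitrary initialization cleanly: the ordering-consistency invariant is natural on a forward-invariant cone but need not hold at $\theta_1$, so I anticipate the technical heart of the proof is a Lyapunov/monotonicity argument showing that either (a) the scores become order-consistent after finitely many iterations, or (b) the suboptimality gap $(\pi^* - \pi_{\theta_t})^\top r$ is controlled by a quantity built from $r'$ and the Gram structure that decreases regardless. Getting constants right so that a single fixed $\eta>0$ works for \emph{every} $\theta_1$ (rather than an initialization-dependent step size) is where the non-domination condition (i) will have to be used most delicately, since it is the only uniform structural handle on $XX^\top$.
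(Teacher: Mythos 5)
Your proposal has two genuine gaps, and both sit exactly where the argument has to do real work. First, the monotone quantity you track is the wrong one. You project the \emph{score} increment onto $r'$, obtaining $\eta\,(r')^\top X X^\top\left(\diagonalmatrix(\pi_{\theta_t})-\pi_{\theta_t}\pi_{\theta_t}^\top\right)r$, and hope condition \textbf{(i)} makes this nonnegative via ``diagonal dominance'' of $XX^\top$; but \textbf{(i)} is only the one-sided statement $x_i^\top x_i > x_i^\top x_j$ (no control of off-diagonal sums or signs), and the inserted Gram factor destroys the sign structure. The quantity that works is the \emph{parameter-space} projection $w^\top\theta_t$: its increment is $\eta\,(Xw)^\top\left(\diagonalmatrix(\pi_{\theta_t})-\pi_{\theta_t}\pi_{\theta_t}^\top\right)r=\eta\cdot\cov\nolimits_{\pi_{\theta_t}}(r',r)\ge 0$, a pure covariance with no Gram matrix, nonnegative by order preservation alone (\cref{lem:non_negative_covariance_between_vectors_with_same_order}, via \cref{lem:alternative_expression_covariance}). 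Likewise your ``consistency-of-ordering invariant'' on the scores $X\theta_t$ is neither implied by \textbf{(i)}--\textbf{(ii)} nor needed: the pairwise increment $(x_i-x_j)^\top X^\top\left(\diagonalmatrix(\pi)-\pi\pi^\top\right)r=\cov\nolimits_\pi(Xx_i-Xx_j,\,r)$ has no guaranteed sign. Condition \textbf{(i)} is used for something much narrower: it says that along the direction $u=x_a$ action $a$ is the top-scored action, $[Xx_a](a)=\max_{a'}[Xx_a](a')$, which is what allows one to show that the score $x_{a^-}^\top\theta_t$ of any action worse than a putative suboptimal limit is eventually monotonically \emph{decreasing}.

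Second, the convergence backbone you invoke is not available. Tabular gradient-domination/{\L}ojasiewicz arguments do not transfer to log-linear policies, and your ``cleaner alternative'' (treat $\pi_\theta^\top r'$ as a realizable surrogate and apply gradient dominance) is refuted inside the paper itself: \cref{prop:counterexample_linearly_realizable_reward} gives a linearly realizable reward on which Softmax PG fails from a specific initialization because \textbf{(i)} fails, so realizability of $r'$ cannot by itself pin the limit to $a^*$. What the actual proof does instead is: (1) smoothness plus monotone improvement give vanishing gradients, and the covariance identity together with order preservation rules out stationary points in any bounded region, so $\|\theta_t\|_2\to\infty$ and $\pi_{\theta_t}$ converges to a (generalized) one-hot policy (\cref{lem:theta_t_l2_norm_approach_infinity_softmax_pg}); (2) assuming the one-hot limit is a suboptimal action $i$, examine the action carrying the second-largest score and split cases --- if it is a better action, $\pi_{\theta_t}^\top r>r(i)$ occurs in finite time, contradicting monotone convergence to $r(i)$; if it is a worse action $a^-$, the monotone increase of $w^\top\theta_t$ and of $x_{a^*}^\top\theta_t$ against the monotone decrease of $x_{a^-}^\top\theta_t$ (this is where \textbf{(i)} enters) eventually forces $[X\theta_t](a^*)>[X\theta_t](a^-)$, a contradiction. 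Your plan neither establishes convergence to a one-hot limit nor supplies a mechanism ruling out suboptimal one-hot limits; the ``burn-in''/forward-invariance step you flag as the hard part is precisely where your approach has no workable replacement, whereas the paper's contradiction argument never needs the scores to become order-consistent at all.
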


A few remarks about this theorem are in order. 

\cref{eg:first_example,eg:second_example,eg:third_example} all satisfy the non-domination condition \textbf{(i)} on $X$, and their differences lie in satisfying reward order preservation or not. However, the following example shows that if the condition \textbf{(i)} on $X$ is removed, then global convergence is not always achievable for even linearly realizable rewards (with zero approximation error).
\begin{proposition}
\label{prop:counterexample_linearly_realizable_reward}
Let $K = 3$, $d = 2$, $X^\top = \begin{bmatrix} 0 & -10 & 0 \ \vspace{0.5ex}\\
    -2 & 4 & 1 \ \end{bmatrix}  \in \sR^{d \times K}$, and $r = X w = \left(4, 2, -2 \right)^\top $, where $w = (-1, -2)^\top \in \sR^d$. With initialization $\theta_1 = (- \ln{2}, \ln{2})^\top$, \cref{alg:softmax_policy_gradient} does not achieve global convergence, i.e., $\pi_{\theta_t}(1) \not\to 1$ as $t \to \infty$.
\end{proposition}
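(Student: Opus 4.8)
The plan is to track the iterates in logit space and exhibit an explicit forward-invariant region from which the optimal action~$1$ can never get probability close to one. Write $z_t := X\theta_t \in \sR^3$, so $\pi_{\theta_t} = \softmax(z_t)$, and note that \cref{alg:softmax_policy_gradient} is $\theta_{t+1} = \theta_t + \eta\, X^\top g_t$ with $g_t(a) = \pi_{\theta_t}(a)\bigl(r(a) - \pi_{\theta_t}^\top r\bigr)$, hence $z_{t+1} = z_t + \eta\, X X^\top g_t$. The decisive structural fact is that $z_t$ always lies in the column span of $X$, which for this particular $X$ equals $\{(-2s, m, s) : s, m \in \sR\}$; therefore $z_t(1) = -2\, z_t(3)$ for all $t$, so $\pi_{\theta_t}(1)/\pi_{\theta_t}(3) = e^{-3 z_t(3)}$ and action~$1$ can only overtake action~$3$ once $z_t(3) < 0$. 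Since the initialization gives $z_1 = (-2\ln 2,\ 14\ln 2,\ \ln 2)^\top$, the process starts with $z_1(3) = \ln 2 > 0$, deep in the regime where action~$1$ is suppressed relative to action~$3$; the goal is to show it stays there.

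Concretely, I would work with the region
\[
\mathcal{R} := \bigl\{ \theta \in \sR^2 : z_\theta(3) \ge \ln 2 \ \text{ and }\ z_\theta(2) - z_\theta(3) \ge 3\ln 2 \bigr\},
\]
i.e.\ $\{\theta^{(2)} \ge \ln 2,\ -10\,\theta^{(1)} + 3\,\theta^{(2)} \ge 3\ln 2\}$. Two immediate facts: $\theta_1 = (-\ln 2, \ln 2)^\top \in \mathcal{R}$ (indeed $z_1(3) = \ln 2$ and $z_1(2) - z_1(3) = 13\ln 2$); and for every $\theta \in \mathcal{R}$ one has $\pi_\theta(1) \le 1/64$, because $z_\theta(2) - z_\theta(1) = \bigl(z_\theta(2) - z_\theta(3)\bigr) + 3\, z_\theta(3) \ge 6\ln 2$ gives $\pi_\theta(1)/\pi_\theta(2) \le 2^{-6}$ while $\pi_\theta(3)/\pi_\theta(2) = e^{-(z_\theta(2)-z_\theta(3))} \le 2^{-3}$, so $\pi_\theta(2) \ge 64/73$, $\pi_\theta(3) \le 9/73$, $\pi_\theta(1) \le 1/64$, and also $\pi_\theta(1) \le \pi_\theta(3)/8$ from $z_\theta(1) = -2 z_\theta(3)$ with $z_\theta(3) \ge \ln 2$. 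Hence, once $\mathcal{R}$ is shown to be forward invariant, $\pi_{\theta_t}(1) \le 1/64$ for every $t$, so in particular $\pi_{\theta_t}(1) \not\to 1$, which is the claim.

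It remains to prove forward invariance. Writing $P_t, Q_t, R_t$ for $\pi_{\theta_t}(1), \pi_{\theta_t}(2), \pi_{\theta_t}(3)$ and using $g_t(1) + g_t(2) + g_t(3) = 0$ together with $r = (4,2,-2)^\top$, a short purely algebraic computation gives the two increments
\[
z_{t+1}(3) - z_t(3) = 6\eta\bigl( 2 Q_t (R_t - P_t) - 3 P_t R_t \bigr),
\]
\[
\bigl( z_{t+1}(2) - z_{t+1}(3) \bigr) - \bigl( z_t(2) - z_t(3) \bigr) = \eta\bigl( 436\, Q_t R_t - 236\, P_t Q_t - 54\, P_t R_t \bigr).
\]
Assume $\theta_t \in \mathcal{R}$, so $P_t \le R_t/8$, $Q_t \ge 64/73$, $R_t \le 9/73$ by the previous paragraph. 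Substituting $P_t \le R_t/8$, the first bracket is at least $\tfrac{R_t}{8}\bigl(14 Q_t - 3 R_t\bigr) > 0$, so $z_{t+1}(3) > z_t(3) \ge \ln 2$; the second bracket is at least $R_t\bigl( \tfrac{813}{2} Q_t - \tfrac{27}{4} R_t\bigr) > 0$, so $z_{t+1}(2) - z_{t+1}(3) > z_t(2) - z_t(3) \ge 3\ln 2$. Thus $\theta_{t+1} \in \mathcal{R}$, and induction from $\theta_1 \in \mathcal{R}$ finishes the proof. No upper bound on $\eta$ is needed, since each increment carries the sign of its bracket regardless of step size.

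The one delicate point is the second increment: the gap $z_t(2) - z_t(3)$ is not monotone for general $\theta$ — it can shrink while action~$1$ momentarily looks attractive, which is exactly the effect of action~$3$'s feature being dominated by action~$2$'s in the Gram matrix — so one genuinely has to use the structure of $\mathcal{R}$, namely that inside $\mathcal{R}$ both $g_t(1)$ and $|g_t(3)|$ are $O(R_t)$ and the coefficient multiplying $|g_t(3)|$ strictly dominates, to see that the increment is positive there. Deriving the two increment formulas cleanly in $(P_t, Q_t, R_t)$ form with the correct constants is the main computation; everything else is bookkeeping with the elementary inequalities above.
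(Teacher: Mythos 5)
Your proof is correct: the column-space identity $z_\theta(1) = -2 z_\theta(3)$, the increment formulas $z_{t+1}(3) - z_t(3) = 6\eta\left(2Q_t(R_t - P_t) - 3P_tR_t\right)$ and $\left(z_{t+1}(2) - z_{t+1}(3)\right) - \left(z_t(2) - z_t(3)\right) = \eta\left(436\,Q_tR_t - 236\,P_tQ_t - 54\,P_tR_t\right)$, and the resulting forward invariance all check out, and no step-size restriction is indeed needed. This is essentially the paper's argument: the paper also proves forward invariance of a region cut out by the same two linear functionals of $\theta$ (equivalently $\pi_\theta(1)/\pi_\theta(3) = e^{-3\theta(2)}$ small and $\pi_\theta(2)/\pi_\theta(3) = e^{-10\theta(1)+3\theta(2)}$ large), by showing $\theta_t(2)$ increases and $\theta_t(1)$ decreases, which is the same monotonicity you derive in logit coordinates.
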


\textbf{Generalization of tabular and linear realizability.} When $d = K$ and $X = \identitymatrix$, i.e., the softmax tabular parameterization $\pi_\theta = \softmax(\theta)$, it is always true that $X r = r$ preserves the order of $r$. 
Consequently, \cref{thm:reward_order_preserving_condition_softmax_pg} recovers the global convergence result for PG in the softmax tabular setting \citep{agarwal2021theory,mei2020global} as a special case. 
More generally, for non-domination features, 
when the reward is linearly realizable, such that $X w = r$ for some $w \in \sR^d$, the global convergence of Softmax PG also follows from \cref{thm:reward_order_preserving_condition_softmax_pg},
since $r$ preserves its own order when the approximation error is zero.

\begin{corollary}[Linearly realizable rewards, non-domination features]
\label{cor:linearly_realizable_reward_softmax_pg}
Given any reward $r \in \sR^K$ and feature matrix $X \in \sR^{K \times d}$. Denote $x_i \in \sR^{d}$ as $i$-th row vector of $X$. If \textbf{(i)} $x_i^\top x_i > x_i^\top x_j$ for all $j \not= i$, and \textbf{(ii)} there exists $w \in \sR^d$, s.t., $ X w = r$, then for any initialization $\theta_1 \in \sR^d$, \cref{alg:softmax_policy_gradient} with a constant learning rate $\eta > 0$ achieves global convergence of $\pi_{\theta_t}^\top r \to r(a^*)$ as $t \to \infty$.
\end{corollary}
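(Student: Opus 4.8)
The plan is to derive Corollary~\ref{cor:linearly_realizable_reward_softmax_pg} as an immediate specialization of Theorem~\ref{thm:reward_order_preserving_condition_softmax_pg}, so the work reduces to checking that the hypotheses of the corollary imply the hypotheses of the theorem. Condition \textbf{(i)} (the non-domination property $x_i^\top x_i > x_i^\top x_j$ for all $j\neq i$) is literally identical in the two statements, so nothing needs to be done there. For condition \textbf{(ii)}, I would take the $w \in \sR^d$ guaranteed by the corollary's assumption, namely one with $X w = r$, and set $r^\prime \coloneqq X w = r$. Then for all $i, j \in [K]$ the equivalence ``$r(i) > r(j)$ if and only if $r^\prime(i) > r^\prime(j)$'' holds trivially, because $r^\prime$ and $r$ are the same vector; hence this particular $w$ witnesses the order-preservation hypothesis \textbf{(ii)} of Theorem~\ref{thm:reward_order_preserving_condition_softmax_pg}.

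With both hypotheses of the theorem verified, I would simply invoke Theorem~\ref{thm:reward_order_preserving_condition_softmax_pg}: for any initialization $\theta_1 \in \sR^d$, \cref{alg:softmax_policy_gradient} with any constant learning rate $\eta > 0$ achieves $\pi_{\theta_t}^\top r \to r(a^*)$ as $t \to \infty$, which is exactly the conclusion of the corollary. It is worth adding a one-line remark that linear realizability $X w = r$ corresponds to zero approximation error $\epsilon_{\text{approx}} = \min_{w}\|Xw - r\|_2 = 0$, which is why this case is highlighted separately — it resolves the open question of whether Softmax PG converges globally under linearly realizable rewards, at least when the representation additionally satisfies non-domination (and \cref{prop:counterexample_linearly_realizable_reward} shows the non-domination requirement cannot simply be dropped).

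There is essentially no technical obstacle here: the entire content of the corollary is packaged inside Theorem~\ref{thm:reward_order_preserving_condition_softmax_pg}, and the only ``step'' is the observation that a vector preserves its own order. If anything, the only thing to be careful about is making explicit that the reward vector supplied to \cref{alg:softmax_policy_gradient} is the same $r$ appearing in the realizability equation $Xw=r$, so that the $a^*$ in the conclusion is the $\argmax$ of the realizable reward; this is immediate from the setup in \cref{eq:expected_reward_maximization}. Consequently the proof is a short paragraph rather than an extended argument, with all the heavy lifting deferred to the proof of Theorem~\ref{thm:reward_order_preserving_condition_softmax_pg}.
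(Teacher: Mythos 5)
Your proposal is correct and matches the paper's own reasoning exactly: the paper derives the corollary from \cref{thm:reward_order_preserving_condition_softmax_pg} by noting that when $Xw = r$, the vector $r' \coloneqq Xw = r$ trivially preserves its own order, so both hypotheses of the theorem hold. Nothing is missing.
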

It is worth mentioning that \cref{prop:counterexample_linearly_realizable_reward,cor:linearly_realizable_reward_softmax_pg} together answer a question which still remain unsolved in PG literature \citep{agarwal2021theory}: with linearly realizable rewards (zero approximation error), whether standard Softmax PG achieves global convergence? \cref{prop:counterexample_linearly_realizable_reward} shows that linearly realizable reward on its own is not enough to guarantee global convergence, while \cref{cor:linearly_realizable_reward_softmax_pg} shows that with adequate features, linearly realizable reward implies global convergence. Note that the NPG global convergence result in \citep{agarwal2021theory}, such as \cref{eq:standard_npg_convergence_approximation_error}, does not apply to standard Softmax PG.

\textbf{Ordering does not determine approximation.} As already illustrated in Section~\ref{sec:findings}, approximation error is not adequate for capturing the global convergence of Softmax PG. It is important to emphasize that the existence of an order preserving reward $r'$ is very different from having a small approximation error. When the approximation error is zero, then an order preserving reward (equal to $r$) always exists. However, in general, $r'$ can take very different values than $r$, and hence have a very large approximation error, yet still enable global convergence as shown in \cref{eg:first_example,eg:third_example}. 


\textbf{Proof idea.}
The idea behind the proof of the main theorem consists of three parts.
We provide a sketch of the proof here; the full proof is given in \cref{sec:appendix_a}. 
\textbf{First}, starting from any initialization $\theta_t \in \sR^d$, \cref{alg:softmax_policy_gradient} guarantees that $\pi_{\theta_t}$ will approach a (generalized) one-hot policy as $t \to \infty$. To see why, first note that $\pi_{\theta}^\top r$ is $\beta$-smooth over $\theta \in \sR^d$ with some $\beta > 0$ (\cref{lem:smoothness_expected_reward_log_linear_policy} in \cref{sec:appendix_b}), since the softmax transform is smooth \citep{agarwal2021theory,mei2020global} and the feature matrix $X$ has bounded values. 
This implies that using a sufficiently small constant learning rate $0 < \eta \le 2 / \beta$ we obtain,
{\small
\begin{align}
\label{eq:monotonicity_expected_reward}
    \pi_{\theta_{t+1}}^\top r - \pi_{\theta_t}^\top r \ge \frac{1}{2 \ \beta} \cdot \bigg\| \frac{d \ \pi_{\theta_t}^\top r}{d \theta_t} \bigg\|_2^2 \ge 0.
\end{align}
}%
Note that $\pi_{\theta}^\top r$ is upper bounded by $r(a^*)$. According to the monotone convergence, $\pi_{\theta_t}^\top r \to c \le r(a^*)$ as $t \to \infty$. This fact combined with \cref{eq:monotonicity_expected_reward} implies $\Big\| \frac{d \ \pi_{\theta_t}^\top r}{d \theta_t} \Big\|_2 \to 0$ as $t \to \infty$. Next, a special co-variance structure of softmax PG (\cref{lem:alternative_expression_covariance}) shows that $\Big\| \frac{d \ \pi_{\theta_t}^\top r}{d \theta_t} \Big\|_2 \to 0$ implies that $\| \theta_t \|_2 \to \infty$ and $\pi_{\theta_t}$  approaches a (generalized) one-hot policy as $t \to \infty$.
\begin{lemma}
\label{lem:theta_t_l2_norm_approach_infinity_softmax_pg}
Under the same conditions as \cref{thm:reward_order_preserving_condition_softmax_pg}, and $r(i) \not= r(j)$ for all $i \not= j$ (unique action reward), \cref{alg:softmax_policy_gradient} assures $\| \theta_t \|_2 \to \infty$ and $\pi_{\theta_t}(i) \to 1$ for an action $i \in [K]$ as $t \to \infty$.
\end{lemma}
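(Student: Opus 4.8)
The plan is to formalize the three-part outline sketched above: (1) show that Softmax PG drives the policy gradient to zero; (2) combine a vanishing gradient with the reward-order-preservation condition~(ii) to force the policy to concentrate on a single action; (3) deduce $\|\theta_t\|_2 \to \infty$ from this concentration. For step~(1), I would invoke $\beta$-smoothness of $\theta \mapsto \pi_\theta^\top r$ (\cref{lem:smoothness_expected_reward_log_linear_policy}), valid because $\softmax$ is smooth and $X$ has bounded entries. With $0 < \eta \le 2/\beta$ the descent lemma gives the ascent inequality \cref{eq:monotonicity_expected_reward}, so $t \mapsto \pi_{\theta_t}^\top r$ is nondecreasing; being bounded above by $r(a^*)$ it converges to some $c \le r(a^*)$, and telescoping \cref{eq:monotonicity_expected_reward} gives $\sum_t \big\| \frac{d\, \pi_{\theta_t}^\top r}{d\theta_t} \big\|_2^2 < \infty$, hence $\big\| \frac{d\, \pi_{\theta_t}^\top r}{d\theta_t} \big\|_2 \to 0$.

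For step~(2), I would use the covariance form of the gradient (\cref{lem:alternative_expression_covariance}): $\frac{d\, \pi_\theta^\top r}{d\theta} = \Cov_{a \sim \pi_\theta}\big( x_a,\, r(a) \big)$. Fixing $w \in \sR^d$ from condition~(ii), so that $r' \coloneqq Xw$ preserves the order of $r$ and $r'(a) = w^\top x_a$, one gets $\big| w^\top \frac{d\, \pi_{\theta_t}^\top r}{d\theta_t} \big| \le \|w\|_2 \big\| \frac{d\, \pi_{\theta_t}^\top r}{d\theta_t} \big\|_2 \to 0$, while $w^\top \frac{d\, \pi_\theta^\top r}{d\theta} = \Cov_{a \sim \pi_\theta}\big(r'(a), r(a)\big) = \tfrac12 \sum_{i \neq j} \pi_\theta(i)\pi_\theta(j)\,\big(r'(i) - r'(j)\big)\big(r(i) - r(j)\big)$. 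By order preservation together with distinct rewards, each factor $\big(r'(i) - r'(j)\big)\big(r(i) - r(j)\big)$ is strictly positive; setting $\delta \coloneqq \min_{i \neq j}\big(r'(i) - r'(j)\big)\big(r(i) - r(j)\big) > 0$ yields $\Cov_{a \sim \pi_{\theta_t}}\big(r'(a), r(a)\big) \ge \tfrac{\delta}{2}\big(1 - \|\pi_{\theta_t}\|_2^2\big) \ge 0$, so $\|\pi_{\theta_t}\|_2^2 \to 1$ and hence $\max_a \pi_{\theta_t}(a) \to 1$ (using $\|\pi\|_2^2 \le \|\pi\|_\infty$ for a probability vector). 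With $i_t \coloneqq \argmax_a \pi_{\theta_t}(a)$ one then has $|\pi_{\theta_t}^\top r - r(i_t)| \le 2\|r\|_\infty\,(1 - \pi_{\theta_t}(i_t)) \to 0$, so $r(i_t) \to c$; since the rewards take finitely many distinct values, $i_t$ is eventually equal to a fixed action $i$ with $r(i) = c$, giving $\pi_{\theta_t}(i) \to 1$.

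For step~(3), for every $j \neq i$ one has $\exp\{[X\theta_t](i) - [X\theta_t](j)\} = \pi_{\theta_t}(i)/\pi_{\theta_t}(j) \to \infty$, so $[X\theta_t](i) - [X\theta_t](j) \to +\infty$ and $\|X\theta_t\|_\infty \to \infty$; since $X$ has full column rank, $\|\theta_t\|_2 \ge \|X\theta_t\|_2 / \sigma_{\max}(X) \ge \|X\theta_t\|_\infty / \sigma_{\max}(X) \to \infty$. I expect the main obstacle to be step~(2): converting the coordinate-free fact ``the gradient vector tends to $0$'' into concentration of the entire policy sequence. The essential ingredients are the covariance identity, the rewriting of a covariance as a sum over unordered pairs, and --- crucially --- extracting the \emph{quantitative} constant $\delta$ from order preservation plus reward distinctness, which is what upgrades ``some subsequence concentrates'' to ``the full sequence concentrates'', after which the monotone limit $c$ and distinctness pin down the single limiting action. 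Steps~(1) and~(3) are routine, relying only on the descent lemma and on norm equivalence from full column rank.
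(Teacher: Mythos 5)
Your proposal is correct, and it reaches the conclusion by a genuinely different route than the paper for the core of the argument. Step (1) coincides with the paper's proof (smoothness, ascent inequality, telescoping, gradient norm $\to 0$). After that, the paper proceeds in the opposite order: it first shows there are \emph{no stationary points in any bounded region} (at any finite $\theta'$ every $\pi_{\theta'}(a)>0$, so by \cref{lem:alternative_expression_covariance} and order preservation the covariance ${r'}^\top(\diagonalmatrix(\pi_{\theta'})-\pi_{\theta'}\pi_{\theta'}^\top)r$ is strictly positive, contradicting $\nabla=0$), then deduces $\|\theta_t\|_2\to\infty$ by a compactness/infimum argument, and only afterwards argues concentration of $\pi_{\theta_t}$ by contradiction. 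You instead prove a \emph{quantitative} lower bound $\cov_{\pi}(r',r)\ge\tfrac{\delta}{2}\bigl(1-\|\pi\|_2^2\bigr)$ with $\delta:=\min_{i\ne j}(r'(i)-r'(j))(r(i)-r(j))>0$ (valid precisely because of the unique-reward hypothesis of this lemma), which gives concentration of the \emph{full} sequence directly from $w^\top\nabla\to0$, pins down the single limiting action via the monotone limit $c$ and reward distinctness, and then obtains $\|\theta_t\|_2\to\infty$ as a corollary of the blow-up of score differences together with boundedness of $X$. This buys you two things: you avoid the compactness argument entirely, and your identification of the limiting action is tighter than the paper's brief "two actions do not vanish" contradiction, which is stated only loosely there. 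The trade-off is that the paper's no-stationary-point step needs only $r\ne c\cdot\rvone$, whereas your $\delta>0$ uses full distinctness throughout; since distinctness is assumed in the lemma, this is legitimate. Two small touch-ups: the telescoping in step (1) needs a strictly positive coefficient, so require $\eta<2/\beta$ (the boundary case $\eta=2/\beta$ gives only monotonicity, not $\|\nabla\|\to0$), matching the paper's strict bound $\eta<4/(9\,\|r\|_\infty\,\lambda_{\max}(X^\top X))$; and in step (3) the inequality $\|\theta_t\|_2\ge\|X\theta_t\|_2/\sigma_{\max}(X)$ needs only $X\neq 0$, not full column rank.
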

\begin{remark}
Removing the unique action reward condition in \cref{lem:theta_t_l2_norm_approach_infinity_softmax_pg} makes $\pi_{\theta_t}$  approach a generalized one-hot policy (rather than a strict one-hot in \cref{lem:theta_t_l2_norm_approach_infinity_softmax_pg}) as $t \to \infty$ as a result.
\end{remark}
According to \cref{lem:theta_t_l2_norm_approach_infinity_softmax_pg}, $\theta_t$ grows unboundedly. Intuitively, this can be seen in \cref{fig:examples_landsacpe}(a), where there are no stationary points in a finite region.

\textbf{Second}, for any vector $r^\prime$ that preserves the order of $r$, we establish the following key lemma.

\begin{lemma}[Non-negative covariance of order preservation]
\label{lem:non_negative_covariance_between_vectors_with_same_order}
If $r^\prime \in \sR^K$  preserves the order of $r \in \sR^K$, i.e., for all $i, j \in [K]$, $r(i) > r(j)$ iff $r^\prime(i) > r^\prime(j)$, then for any policy $\pi \in \Delta(K)$,
\begin{align}
\label{eq:non_negative_covariance}
    {r^\prime}^\top \left( \diagonalmatrix(\pi) - \pi \pi^\top \right) r = \cov\nolimits_{\pi}{(r^\prime, r)} \ge 0.
\end{align}
\end{lemma}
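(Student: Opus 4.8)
The plan is to prove both halves of \cref{lem:non_negative_covariance_between_vectors_with_same_order} directly, recognizing the non-negativity as an instance of Chebyshev's sum inequality (a comonotonicity / FKG-type argument) on the finite probability space $([K],\pi)$.

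\textbf{Step 1: the covariance identity.} I would expand the quadratic form coordinate-wise. Writing out $\diagonalmatrix(\pi) - \pi \pi^\top$ gives
\[
    {r^\prime}^\top \left( \diagonalmatrix(\pi) - \pi \pi^\top \right) r = \sum_{a \in [K]} \pi(a)\, r^\prime(a)\, r(a) - \Big( \sum_{a \in [K]} \pi(a)\, r^\prime(a) \Big)\Big( \sum_{a \in [K]} \pi(a)\, r(a) \Big),
\]
which is precisely $\E_{a \sim \pi}[r^\prime(a)\, r(a)] - \E_{a \sim \pi}[r^\prime(a)]\, \E_{a \sim \pi}[r(a)] = \cov_\pi(r^\prime, r)$. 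This is routine bookkeeping.

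\textbf{Step 2: non-negativity.} The substance is the inequality, for which I would use the standard ``doubling'' identity that rewrites a covariance as an average over ordered pairs:
\[
    2\, \cov\nolimits_\pi(r^\prime, r) = \sum_{i \in [K]} \sum_{j \in [K]} \pi(i)\, \pi(j)\, \big( r^\prime(i) - r^\prime(j) \big)\big( r(i) - r(j) \big),
\]
obtained by multiplying out the right-hand side and collapsing sums via $\sum_{a \in [K]} \pi(a) = 1$. I then argue that every summand is non-negative: $\pi(i)\pi(j) \ge 0$ because $\pi \in \Delta(K)$, and $\big( r^\prime(i) - r^\prime(j) \big)\big( r(i) - r(j) \big) \ge 0$ by order preservation. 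Concretely, if $r(i) > r(j)$ then $r^\prime(i) > r^\prime(j)$, so both factors are positive; if $r(i) < r(j)$ then symmetrically both are negative; and if $r(i) = r(j)$, then since the hypothesis is an ``iff'' on strict inequalities, neither $r^\prime(i) > r^\prime(j)$ nor $r^\prime(j) > r^\prime(i)$ can hold, forcing $r^\prime(i) = r^\prime(j)$ and making the product zero. Summing over all $i,j$ gives $\cov_\pi(r^\prime, r) \ge 0$.

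\textbf{Main obstacle.} There is no real obstacle; the only point requiring a moment's care is the tie case $r(i) = r(j)$, where one must observe that the stated equivalence on strict orderings also yields $r(i) = r(j) \iff r^\prime(i) = r^\prime(j)$, so that no summand is negative. Everything else is elementary algebra, and the argument works uniformly over all $\pi \in \Delta(K)$.
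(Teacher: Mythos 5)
Your proof is correct and takes essentially the same route as the paper's: the paper's auxiliary lemma (\cref{lem:alternative_expression_covariance}) is precisely your pairwise decomposition of the covariance, written as a sum over pairs $i<j$ rather than over all ordered pairs (a factor-of-two cosmetic difference), and the termwise sign analysis via order preservation, including the tie case $r(i)=r(j)$, matches the paper's argument. No changes are needed.
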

Now consider the direction $w \in \sR^d$ such that $r^\prime \coloneqq X w $ preserves the order of $r$. We have,
\begin{align}
\label{eq:monotonicity_w_inner_pproduct_theta_t_softmax_pg}
    w^\top \theta_{t+1} &= w^\top \theta_t + \eta \cdot w^\top X^\top \left( \diagonalmatrix{ ( \pi_{\theta_t} ) } - \pi_{\theta_t} \pi_{\theta_t}^\top \right) r \qquad \left( \text{by \cref{alg:softmax_policy_gradient}} \right) \\
    &= w^\top \theta_t + \eta \cdot {r^\prime}^\top \left( \diagonalmatrix{ ( \pi_{\theta_t} ) } - \pi_{\theta_t} \pi_{\theta_t}^\top \right) r \qquad \left( r^\prime \coloneqq X w \right) \\
    &\ge w^\top \theta_t. \qquad \left( \text{by \cref{lem:non_negative_covariance_between_vectors_with_same_order}} \right)
\end{align}

\textbf{Third}, take a sub-optimal action $i \in [K]$ with $r(i) < r(a^*)$, and we show that the assumption $\pi_{\theta_t}(i) \to 1$ as $t\rightarrow\infty$ leads to a contradiction. 

To that end, first observe that this assumption implies that for all large enough time $t \ge 1$,
{\small
\begin{align}
    \bigg[\frac{X \theta_t}{\| \theta_t \|_2} \bigg](i) = \max_{a \in [K]} \bigg[\frac{X \theta_t}{\| \theta_t \|_2} \bigg](a),
\end{align}
}%
which means that the sub-optimal action $i \in [K]$ always has the largest score (since its probability $\pi_{\theta_t}(i) \to 1$ is always the largest). Moreover, differences between actions' scores are unbounded, due to $\frac{ \pi_{\theta_t}(i)}{\pi_{\theta_t}(j)} = \exp\big\{ [X \theta_t](i) - [X \theta_t](j) \big\} \to \infty$ for all other actions $j \not= i$.

\begin{wrapfigure}{r}{0.53\textwidth}
\vspace{-18pt}
  \begin{center}
    \includegraphics[width=0.53\textwidth]{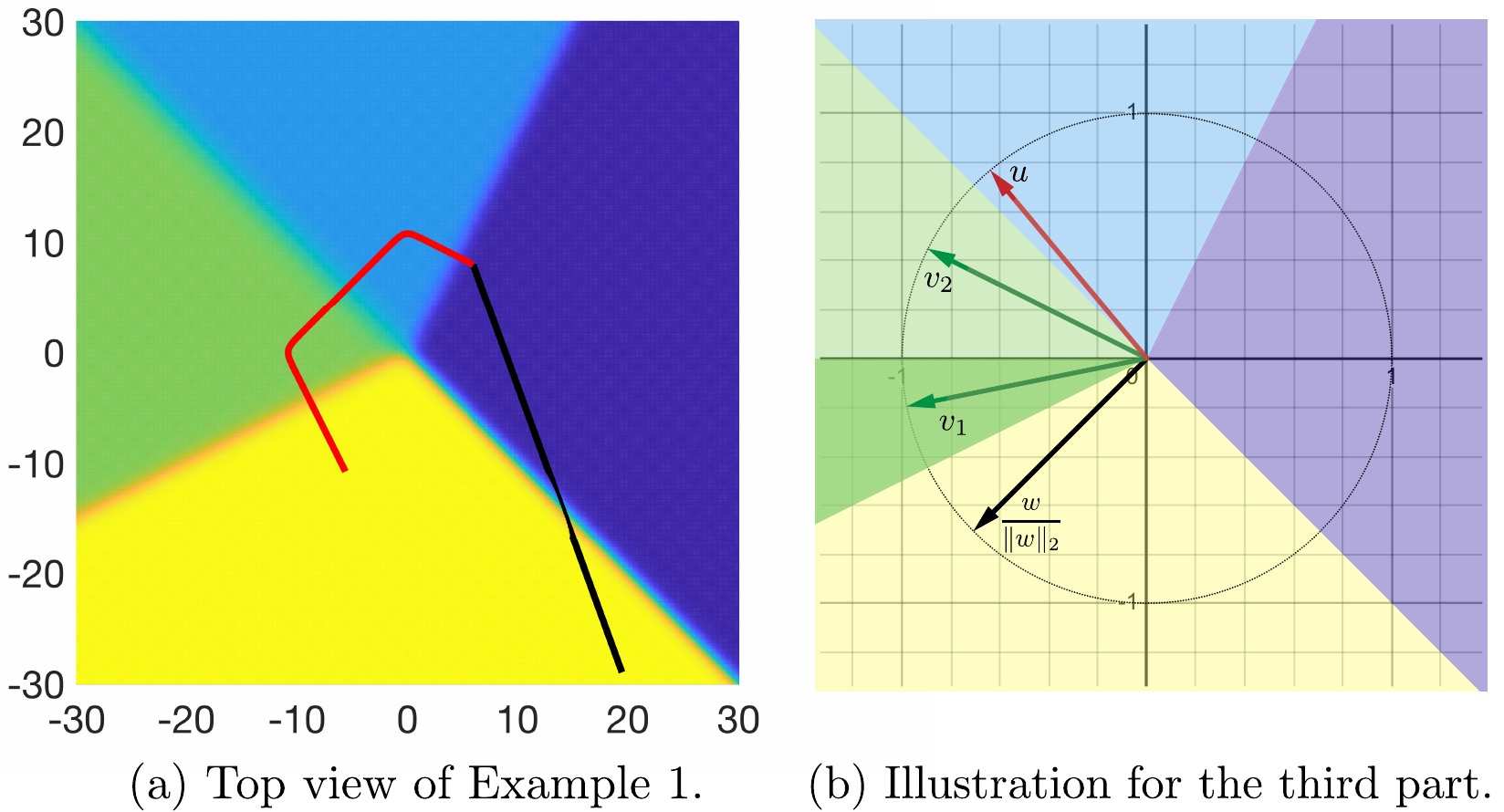}
  \end{center}
\caption{Idea illustration.}
\vspace{-18pt}
\label{fig:proof_illustration}
\end{wrapfigure}

Consider \cref{eg:first_example} for illustration. 
The top view of \cref{fig:examples_landsacpe}(a) is shown in \cref{fig:proof_illustration}(a). 
Take $i = 2$ and $r(i) = 8$, and assume $\frac{\theta_t}{\| \theta_t \|_2}$ stays in the green (sub-optimal) region of \cref{fig:proof_illustration}(a), excluding its boundaries. 
This green region is partitioned in \cref{fig:proof_illustration}(b), where the dark sub-region contains $v_2 \in \sR^d$ such that $[X v_2](a^*)$ is the second largest component among all $a \not= 2$, and the light sub-region is the remaining.

The argument is completed by addressing the two cases: 
\textbf{(i)} If $\frac{\theta_t}{\| \theta_t \|_2}$ stays in the dark sub-region where $v_1 \in \sR^d$ belongs to, then $\pi_{\theta_t}^\top r > r(i) = 8$ must occur in finite $t < \infty$, implying $\pi_{\theta_t}(i) \not\to 1$, contradicting the assumption. 
Intuitively, the contradiction occurs because the dark sub-region is closer to a higher plateau with reward $9$, and scaling up $\theta_t$'s magnitude in this sub-region eventually ensures $\pi_{\theta_t}^\top r > r(i) = 8$. 
\textbf{(ii)} If $\frac{\theta_t}{\| \theta_t \|_2}$ stays in the light sub-region which contains $v_2 \in \sR^d$, then $w^\top \theta_t > u^\top \theta_t$ must occur in finite time $t < \infty$, implying that $\frac{\theta_t}{\| \theta_t \|_2}$ will enter the dark sub-region, reducing to the first case. 
This argument depends on \cref{eq:monotonicity_w_inner_pproduct_theta_t_softmax_pg} and a 
key
observation showing that $u^\top \theta_{t+1} < u^\top \theta_t$, where $u$ is a ``worse'' direction such that $[X u](a^-) = \max_{a \in [K]}{[Xu](a)}$ for some $a^- \in [K]$ with $r(a^-) < r(i)$.

To summarize, $\frac{\theta_t}{\| \theta_t \|_2}$ cannot always stay in the green sub-optimal region in \cref{fig:proof_illustration}(a), which implies that $\frac{\theta_t}{\| \theta_t \|_2}$ must eventually enter the optimal region that contains $w$ and stay in that region.
By \cref{lem:theta_t_l2_norm_approach_infinity_softmax_pg} we then obtain
$\pi_{\theta_t}(a^*) \to 1$ and $\pi_{\theta_t}^\top r \to r(a^*)$ as $t \to \infty$ (see appendix).

\subsection{Optimal Action Preservation is Necessary and Sufficient for NPG Convergence}

Next, we investigate the global convergence conditions for NPG under log-linear policies.
Unlike Softmax PG, the key property for determining global convergence of NPG is whether the projection of the rewards $r$ onto the feature representation $X$ preserves the top ranking of the optimal action.

\paragraph{Intuition and demonstration.}
First consider \cref{eg:first_example} where NPG successfully converges to a global maximum.
From \cref{alg:natural_policy_gradient}, a simple calculation shows,
{\small
\begin{align}
\label{eq:intuition_npg_global_convergence_first_example_1}
    X \theta_{t+1} = X \theta_{t} + \eta \cdot X ( X^\top X )^{-1} X^\top r = X \theta_{t} + \eta \cdot \frac{1}{5} \cdot \left(22, -4, -11, 8 \right)^\top,
\end{align}
}%
which implies that the optimal action $a^* = 1$ always receives the largest update to its score $[X \theta_t](a^*)$ in each iteration. 
Next, take a sub-optimal action $a = 2$, as an example, and observe that,
{\small
\begin{align}
\label{eq:intuition_npg_global_convergence_first_example_2}
    \frac{ \pi_{\theta_{t+1}}(a^*) }{ \pi_{\theta_{t+1}}(a) } = \frac{ \pi_{\theta_{t}}(a^*) }{ \pi_{\theta_{t}}(a) } \cdot \exp\big\{ \eta \cdot (\hat{r}(a^*) - \hat{r}(a)) \big\} = \frac{ \pi_{\theta_{t}}(a^*) }{ \pi_{\theta_{t}}(a) } \cdot \exp\Big\{ \eta \cdot \frac{26}{5} \Big\}
\end{align}
}%
by \cref{eq:log_linear_policy}, where $\hat{r} \coloneqq X ( X^\top X )^{-1} X^\top r $.
Using a constant learning rate $\eta > 0$ and applying \cref{eq:intuition_npg_global_convergence_first_example_2}, we have that $\pi_{\theta_{t}}(a^*)$ grows exponentially with $t$,  indicating that $\pi_{\theta_t}(a^*) \to 1$ as $t \to \infty$ since the same argument works for any sup-optimal action $a \not= a^*$. 
Moreover, the rate is $O(e^{-c \cdot t})$, since $\left( \pi^* - \pi_{\theta_t} \right)^\top r \le 2 \cdot \| r \|_\infty \cdot \left( 1 - \pi_{\theta_t}(a^*) \right)$. The $O(e^{-c \cdot t})$ rate matches the results in softmax tabular settings \citep{khodadadian2021linear,mei2022role,lan2023policy,xiao2022convergence}.

Second, consider \cref{eg:second_example} where NPG fails to converge to a global maximum.
Using similar calculations to \cref{eq:intuition_npg_global_convergence_first_example_1} we obtain, 
\begin{align}
\label{eq:intuition_npg_global_convergence_second_example_1}
    X \theta_{t+1} = X \theta_{t} + \eta \cdot \hat{r } = X \theta_t + \eta \cdot \frac{1}{5} \cdot \left(-3, 18, -9, -6 \right)^\top
\end{align}
which implies that a sub-optimal action $a = 2$ always receives the largest update on its score $[X \theta_t](2)$ in each iteration. 
The failure in \cref{fig:examples_landsacpe}(b) is then verified by similar arguments around \cref{eq:intuition_npg_global_convergence_first_example_2}.

\paragraph{Main NPG result.}
Based on these observations, it is evident that for NPG to converge globally, it is important for the optimal action to eventually always receive the largest update to its score, 
which makes it critical that the least square projection $X ( X^\top X )^{-1} X^\top r$ preserves the top ranking of the optimal action. We formalize this intuition by establishing the following main result.

\begin{theorem}[Optimal action preservation condition]
\label{thm:optimal_action_preserving_condition_npg}
For a constant learning rate $\eta > 0$,
a necessary and sufficient condition for \cref{alg:natural_policy_gradient} to achieve global convergence $\pi_{\theta_t}^\top r \to r(a^*)$ as $t \to \infty$ from any initialization $\theta_1 \in \sR^d$ is that $\hat{r}(a^*) > \hat{r}(a)$ for all $a \not= a^*$, 
such that $a^* \coloneqq \argmax_{a \in [K]}{r(a)}$, and $\hat{r} \coloneqq X \left( X^\top X \right)^{-1} X^\top r $ is the least squares projection of $r$ onto the column space of $X$. 
If the condition is satisfied, then the rate of convergence is $\left( \pi^* - \pi_{\theta_t}\right)^\top r \in O(e^{-c \cdot t})$ for some $c > 0$.
\end{theorem}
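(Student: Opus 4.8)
The plan rests on one structural observation: since \cref{alg:natural_policy_gradient} adds the \emph{same} vector $\eta\,(X^\top X)^{-1}X^\top r$ at every iteration, the iterate is affine in $t$,
\[
\theta_t = \theta_1 + (t-1)\,\eta\,(X^\top X)^{-1}X^\top r,
\qquad\text{so}\qquad
X\theta_t = X\theta_1 + (t-1)\,\eta\,\hat{r},
\]
with $\hat{r} \coloneqq X(X^\top X)^{-1}X^\top r$. Hence for any actions $a,b\in[K]$,
\[
\log\frac{\pi_{\theta_t}(a)}{\pi_{\theta_t}(b)} = \bigl([X\theta_1](a)-[X\theta_1](b)\bigr) + (t-1)\,\eta\,\bigl(\hat{r}(a)-\hat{r}(b)\bigr),
\]
so the whole trajectory of $\pi_{\theta_t}$ is pinned down by these log-ratios, and the problem reduces to analyzing the softmax of a vector that translates linearly along $\hat{r}$.

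\textbf{Sufficiency.} Assume $\hat{r}(a^*)>\hat{r}(a)$ for all $a\ne a^*$, and set $\delta \coloneqq \min_{a\ne a^*}\bigl(\hat{r}(a^*)-\hat{r}(a)\bigr)>0$ and $C \coloneqq \max_{a\ne a^*}\bigl([X\theta_1](a)-[X\theta_1](a^*)\bigr)$. The log-ratio identity gives $\pi_{\theta_t}(a)/\pi_{\theta_t}(a^*)\le \exp\{C-(t-1)\eta\delta\}$ for every $a\ne a^*$; summing over $a\ne a^*$ and using $1-\pi_{\theta_t}(a^*)=\pi_{\theta_t}(a^*)\sum_{a\ne a^*}\pi_{\theta_t}(a)/\pi_{\theta_t}(a^*)\le\sum_{a\ne a^*}\pi_{\theta_t}(a)/\pi_{\theta_t}(a^*)$ yields $1-\pi_{\theta_t}(a^*)\le (K-1)\exp\{C-(t-1)\eta\delta\}$. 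Since $(\pi^*-\pi_{\theta_t})^\top r = r(a^*)-\pi_{\theta_t}^\top r\le 2\,\|r\|_\infty\,\bigl(1-\pi_{\theta_t}(a^*)\bigr)$, this is $O(e^{-c t})$ with $c=\eta\delta>0$, and in particular $\pi_{\theta_t}^\top r\to r(a^*)$.

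\textbf{Necessity (contrapositive).} Suppose the condition fails, i.e., there is $a'\ne a^*$ with $\hat{r}(a')\ge\hat{r}(a^*)$; I exhibit an initialization for which $r(a^*)-\pi_{\theta_t}^\top r$ stays bounded away from $0$. If $\hat{r}(a')>\hat{r}(a^*)$, then for \emph{any} $\theta_1$ the log-ratio $\log\bigl(\pi_{\theta_t}(a')/\pi_{\theta_t}(a^*)\bigr)\to+\infty$, so $\pi_{\theta_t}(a^*)\to 0$, and $r(a^*)-\pi_{\theta_t}^\top r\ge\bigl(1-\pi_{\theta_t}(a^*)\bigr)\bigl(r(a^*)-\max_{a\ne a^*}r(a)\bigr)$ keeps the gap bounded below. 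If $\hat{r}(a')=\hat{r}(a^*)$, then $\log\bigl(\pi_{\theta_t}(a')/\pi_{\theta_t}(a^*)\bigr)=(x_{a'}-x_{a^*})^\top\theta_1$ is constant in $t$; taking $\theta_1 = x_{a'}-x_{a^*}$ makes it equal to $\|x_{a'}-x_{a^*}\|_2^2>0$ unless the feature rows $x_{a'},x_{a^*}$ coincide, in which case $\pi_{\theta_t}(a^*)=\pi_{\theta_t}(a')\le\tfrac12$ for every initialization. In all cases $\sup_t\pi_{\theta_t}(a^*)<1$, so the sub-optimality gap is again bounded away from $0$ and global convergence fails. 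Combining the two directions gives the equivalence and the stated rate.

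\textbf{Main difficulty.} There is no deep obstacle once the affine-in-$t$ form of the NPG iterate is recognized; sufficiency then collapses to a geometric-series estimate. The care lies in the necessity argument for the tied case $\hat{r}(a')=\hat{r}(a^*)$ — constructing a concrete bad $\theta_1$ and disposing of the degenerate possibility $x_{a'}=x_{a^*}$ — and in the bookkeeping around the convention that $\argmax_a r(a)$ is a \emph{unique} $a^*$, which is what guarantees $r(a^*)-\max_{a\ne a^*}r(a)>0$ and supplies the constant gap used throughout.
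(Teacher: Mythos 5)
Your proposal is correct and follows essentially the same route as the paper: both exploit that NPG translates the logits by the fixed vector $\eta\,\hat{r}$ each step, derive the log-ratio identity, obtain the $O(e^{-\eta\delta t})$ rate for sufficiency via the geometric decay of $\sum_{a\ne a^*}\pi_{\theta_t}(a)/\pi_{\theta_t}(a^*)$, and for necessity show $\pi_{\theta_t}(a^*)$ stays bounded away from $1$ so the gap is at least a positive multiple of $r(a^*)-\max_{a\ne a^*}r(a)$. The only cosmetic difference is in necessity: the paper handles $\hat{r}(a)\ge\hat{r}(a^*)$ uniformly, noting that $\pi_{\theta_t}(a^*)/\pi_{\theta_t}(a)$ never increases so failure occurs from \emph{every} initialization, whereas your tie-case construction of a special $\theta_1$ is unnecessary (any $\theta_1$ keeps the ratio constant and hence $\pi_{\theta_t}(a^*)$ bounded away from $1$) but still suffices to refute the "any initialization" claim.
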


\textbf{Proof idea.} When the optimal action preservation is satisfied, similar arguments to \cref{eq:intuition_npg_global_convergence_first_example_1,eq:intuition_npg_global_convergence_first_example_2} guarantee that $\pi_{\theta_{t}}(a^*)$ grows exponentially with $t$,  indicating that $\pi_{\theta_t}(a^*) \to 1$ as $t \to \infty$.

The constant $c > 0$ in \cref{thm:optimal_action_preserving_condition_npg} depends on the gap of $\hat{r}$, i.e., $\hat{r}(a^*) - \max_{a \not= a^*}{ \hat{r}(a) }$, which finds similarities to NPG results in tabular settings \citep{khodadadian2021linear,khodadadian2022linear}. The main difference is that the gap of true reward $r$ in tabular cases is replaced with the gap of least square projection $\hat{r}$ in function approximation settings in \cref{thm:optimal_action_preserving_condition_npg}. This similarity is an evidence for improving the rate to super-linear by using geometrically increasing step sizes, as in tabular settings \citep{lan2023policy,xiao2022convergence,li2022homotopic,yuan2022linear,alfano2023novel}.

\paragraph{One-sided Approximation Error. }
For NPG,  \citep[Lemma 6.2]{agarwal2021theory} introduces a ``one-sided approximation error'' quantity,
which aims to overestimate the advantage of the optimal action $a^*$, 
\begin{align}
\label{eq:one_side_error_npg}
    \epsilon_{t} &\coloneqq r(a^*) - \pi_{\theta_t}^\top r - w^\top \left( x_{a^*} - X^\top \pi_{\theta_t} \right) = r(a^*) - \pi_{\theta_t}^\top r - \left( \hat{r}(a^*) - \pi_{\theta_t}^\top \hat{r} \right).
\end{align}
This quantity relaxes the notion of approximation error and still guarantees the global convergence of NPG,
since if $\sum_{t=1}{\epsilon_t} \in o(T)$, then NPG with $\eta \in O(1/\sqrt{T})$ achieves global convergence \citep[Lemma 6.2]{agarwal2021theory}. 
We note however that \cref{eq:one_side_error_npg} has two limitations:
\textbf{(i)} \cref{eq:one_side_error_npg} depends on the entire update trajectory $\{ \theta_t \}_{t \ge 1}$, which is hard to verify. 
By contrast, the optimal action preservation condition in \cref{thm:optimal_action_preserving_condition_npg} only involves problem quantities $X$ and $r$. \textbf{(ii)} It is not clear whether \cref{eq:one_side_error_npg} is a necessary condition for global convergence, while 
optimal action preservation is proved above to be both necessary and sufficient. 

\section{Simulation Study}

We conducted additional simulations to check the theoretical results. 
\textbf{First}, we check whether 
the strict inequality of $\hat{r}(a^*) > \hat{r}(a)$ for all $a \not= a^*$ in \cref{thm:optimal_action_preserving_condition_npg}
is required for NPG global convergence.
\begin{example}
\label{eg:fourth_example}
$K = 4$, $d = 2$, $X^\top = \begin{bmatrix} 0 & -1 & 0 & 1 \ \vspace{0.5ex}\\
    -1 & 0 & 1 & 0 \ \end{bmatrix}  \in \sR^{d \times K}$, and $r = \left(9, 8, 7, 6 \right)^\top \in \sR^K$. The best fit for $r$ is $\hat{r} = X \left( X^\top X \right)^{-1} X^\top r = \left(1, 1, -1, -1 \right)^\top$.
\end{example}
\cref{eg:fourth_example} has $\hat{r}(a^*) = \hat{r}(1) = \hat{r}(2)$, which violates the strict inequality condition of $\hat{r}(a^*) > \hat{r}(a)$ for all $a \not= a^*$. The consequence is that NPG guarantees $\frac{ \pi_{\theta_t}(a^*) }{ \pi_{\theta_t}(2) } = \frac{ \pi_{\theta_1}(a^*) }{ \pi_{\theta_1}(2) }$ for all $t \ge 1$, which makes it impossible for $\pi_{\theta_t}(a^*) \to 1$ as $t \to \infty$. 
This is  observed in \cref{fig:simulations_verifications}(a), supporting that the strictly inequality condition in \cref{thm:optimal_action_preserving_condition_npg} is indeed necessary. The initialization is $\theta_1 = (4, 10)^\top$, and $\eta = 0.2$. We run $150$ iterations for NPG and $1.5 \times 10^7$ iterations for Softmax PG.

\begin{figure}[ht]
\begin{center}
\centerline{\includegraphics[width=1.0\linewidth]{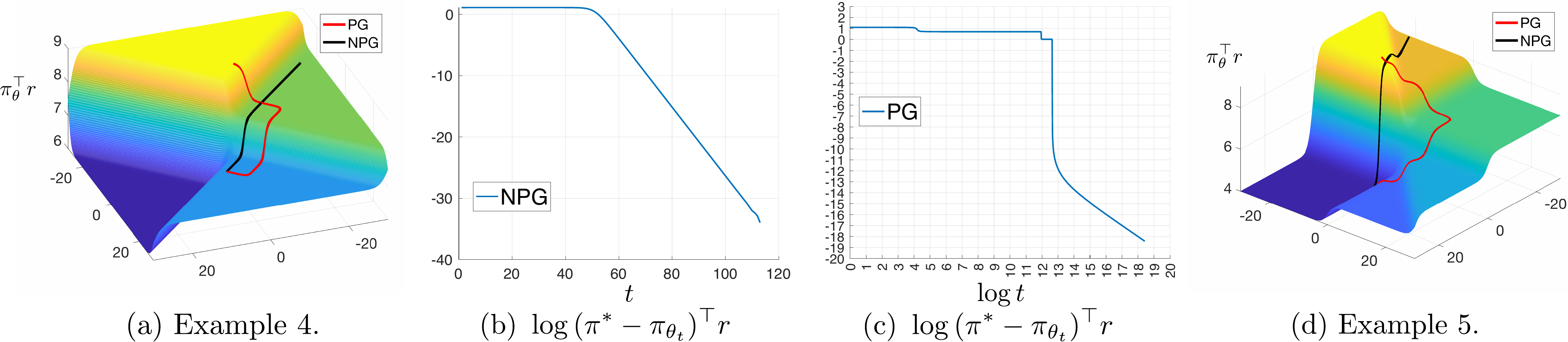}}
\caption{Simulations for verifying theoretical results.}
\vskip -0.1in
\label{fig:simulations_verifications}
\end{center}
\end{figure}

\textbf{Second}, we run $150$ iterations of NPG on \cref{eg:first_example}. 
As shown in \cref{fig:simulations_verifications}(b), the quantity $\log{ ( \pi^* - \pi_{\theta_t} )^\top r}$ is a linear function of time $t$, 
implying that $( \pi^* - \pi_{\theta_t} )^\top r \in O(e^{- c \cdot t})$ with $c > 0$. 
This supports the convergence rate results in \cref{thm:optimal_action_preserving_condition_npg}. 
Here $\theta_1$ and $\eta$ are the same as in \cref{fig:examples_landsacpe}(a).

\textbf{Third}, we check whether the condition in \cref{thm:reward_order_preserving_condition_softmax_pg} is required for Softmax PG global convergence.

\begin{example}
\label{eg:fifth_example}
$K = 6$, $d = 2$, {\small $X^\top = \begin{bmatrix} 0 & -1 & -1 & 0 & 1 & 1 \ \vspace{0.5ex}\\
    -1 & 0 & 1 & 1 & 0 & -1 \ \end{bmatrix}$}, and $r = \left(9, 8, 7, 6, 5, 4 \right)^\top$.
\end{example}
Similar to \cref{eq:order_preserving_calculation_second_example_1}, it is impossible to find any $w \in \sR^d$, such that $r^\prime \coloneqq Xw$ preserves the order of $r$ in \cref{eg:fifth_example}. However, as shown in \cref{fig:simulations_verifications}(d), Softmax PG achieves $\pi_{\theta_t}^\top \to r(a^*) = 9$, indicating that the reward order preservation condition in \cref{thm:reward_order_preserving_condition_softmax_pg} is sufficient but not necessary for PG to achieve global convergence. The initialization is $\theta_1 = (10, -2)^\top$, and $\eta = 0.2$. We run $100$ iterations for NPG and $2 \times 10^6$ iterations for Softmax PG. 
Note that NPG behaves erratically on \cref{eg:fifth_example} (which does not satisfy its global convergence conditions), 
by first entering then leaving the optimal plateau, eventually approaching a sub-optimal solution.

\textbf{Finally}, we run $10^8$ iterations of Softmax PG on \cref{eg:first_example}, using the same $\eta$ and $\theta_1$ as in \cref{fig:examples_landsacpe}(a). \cref{fig:simulations_verifications}(c) shows that the slope of $\log{ ( \pi^* - \pi_{\theta_t} )^\top r}$ over $\log{t}$ approaches $-1$, indicating that the global convergence rate is $( \pi^* - \pi_{\theta_t} )^\top r \in O(1/t)$, matching the softmax tabular setting results \citep{mei2020global}.

\section{Discussions}

\textbf{Checking ordering-based conditions.} Checking the existence of $w \in \sR^d$ in \cref{thm:reward_order_preserving_condition_softmax_pg} is known as linear feasibility in literature \citep{grotschel2012geometric}, i.e., determining whether a set of inequalities has a non-empty intersection. In particular, suppose $X \in \sR^{K 
\times d}$, and $r \in \sR^K$ is sorted, i.e., $r(1) \ge r(2) \ge \cdots \ge r(K)$. Denote $x_i \in \sR^{d}$ as the $i$-th row vector of $X$. The linear feasibility problem in this case is to check if there exists $w \in \sR^d$, such that for all $i \in [K -1]$, $x_i^\top w \ge x_{i+1}^\top w$. Linear feasibility can be cast as linear programming (LP) using a dummy objective and keeping the constraints, hence any LP technique, such as the ellipsoid method, can be used to solve it \citep{grotschel2012geometric}. On the other hand, checking the optimal action preservation in \cref{thm:optimal_action_preserving_condition_npg} requires the same information as in calculating approximation error $\left\| \hat{r} - r \right\|_2 = \min\limits_{w \in \sR^d}{ \left\| X w - r \right\|_2}$, since $\argmax_{a \in [K]}{ \hat{r}(a)} = \argmax_{a \in [K]}{ r(a)}$ can be immediately verified after calculating the projection $\hat{r} \coloneqq X^\top (X^\top X)^{-1} X^\top r$.

\textbf{Generalization to Markov decision processes (MDPs).} Our work provides some new and useful insights for understanding more complex settings, but it requires further investigation to resolve this highly non-trivial problem for general MDPs. See \cref{sec:appendix_c} for detailed discussions. 

\section{Conclusions and Future Work}

We believe this work opens new directions for understanding PG-based methods under function approximation, going well beyond the conventional approximation error based analysis. The major technical findings involve ordering-based conditions and relevant techniques (covariance and global convergence). Identifying exact necessary and sufficient conditions for the global convergence of Softmax PG remains future work. Extending the results and techniques to general MDPs is another important and challenging next step. Combining function approximation with recent results on stochastic on-policy sampling  \citep{mei2022role} is another interesting direction for agnostic learning. Investigating whether these new global convergence conditions might be used to achieve better representation learning is of great interest for algorithm design. Generalizing the proof techniques to other scenarios where non-linear transforms (activation functions) interact with low-dimensional features through gradient descent, such as in neural networks, is another lofty ambition.

\begin{ack}
The authors would like to thank anonymous reviewers for their valuable comments. CS, DS gratefully acknowledge funding from the Canada CIFAR AI Chairs Program, Amii and NSERC.
\end{ack}

{\small
\bibliography{neurips_refs}
}
\bibliographystyle{plain}

\newpage
\appendix

\section{Proofs for Main Results}
\label{sec:appendix_a}

\textbf{\cref{prop:softmax_pg_npg_global_convergence_first_example}.}
Denote $a^* \coloneqq \argmax_{a \in [K]}{ r(a) }$. With constant $\eta > 0$ and any initialization $\theta_1 \in \sR^d$, both \cref{alg:softmax_policy_gradient,alg:natural_policy_gradient} guarantee $\pi_{\theta_t}^\top r \to r(a^*)$ as $t \to \infty$ on \cref{eg:first_example}.
\begin{proof}
\textbf{First part.} \cref{alg:softmax_policy_gradient} guarantees $\pi_{\theta_t}^\top r \to r(a^*)$ as $t \to \infty$ on \cref{eg:first_example}.

Let $w = (-1, -1)^\top \in \sR^d$. We have
\begin{align}
\label{eq:softmax_pg_npg_global_convergence_first_example_intermediate_1}
    r^\prime \coloneqq X w = \left(2, 1, -1, -2 \right)^\top,
\end{align}
which preserves the ordering of $r \in \sR^K$, such that for all $i, j \in [K]$, $r(i) > r(j)$ if and only if $r^\prime(i) > r^\prime(j)$, which means \cref{eg:first_example} satisfies the conditions in \cref{thm:reward_order_preserving_condition_softmax_pg}. The results then follow by using \cref{thm:reward_order_preserving_condition_softmax_pg}.

\textbf{Second part.} \cref{alg:natural_policy_gradient} guarantees $\pi_{\theta_t}^\top r \to r(a^*)$ as $t \to \infty$ on \cref{eg:first_example}.

First, note that $r = (9, 8, 7, 6)^\top$, and $a^* = \argmax_{a \in [K]} r(a) = 1$. Next, by calculation, we have,
\begin{align}
\label{eq:softmax_pg_npg_global_convergence_first_example_intermediate_2}
    \hat{r} \coloneqq X ( X^\top X )^{-1} X^\top r = \frac{1}{5} \cdot \left(22, -4, -11, 8 \right)^\top.
\end{align}
Therefore, we have, $\hat{r}(a^*) = \hat{r}(1) > \hat{r}(a)$ for all $a \not= a^*$, which means \cref{eg:first_example} satisfies the conditions in \cref{thm:optimal_action_preserving_condition_npg}. The results then follow by using \cref{thm:optimal_action_preserving_condition_npg}.
\end{proof}

\textbf{\cref{lem:theta_t_l2_norm_approach_infinity_softmax_pg}}  (No stationary points in finite region)\textbf{.}
Under the same conditions as \cref{thm:reward_order_preserving_condition_softmax_pg}, and $r(i) \not= r(j)$ for all $i \not= j$ (unique action reward), \cref{alg:softmax_policy_gradient} assures $\| \theta_t \|_2 \to \infty$ and $\pi_{\theta_t}(i) \to 1$ for an action $i \in [K]$ as $t \to \infty$.
\begin{proof}
According to \cref{lem:smoothness_expected_reward_log_linear_policy}, we have, for all $t \ge 1$,
\begin{align}
\label{eq:theta_t_l2_norm_approach_infinity_softmax_pg_intermediate_1}
    \bigg| ( \pi_{\theta_{t+1}} - \pi_{\theta_t})^\top r - \Big\langle \frac{d \ \pi_{\theta_t}^\top r}{d \theta_t}, \theta_{t+1} - \theta_t \Big\rangle \bigg| \le \frac{9}{4} \cdot \| r \|_\infty \cdot \lambda_{\max}(X^\top X) \cdot \| \theta_{t+1} - \theta_t \|_2^2,
\end{align}
which implies that,
\begin{align}
\label{eq:theta_t_l2_norm_approach_infinity_softmax_pg_intermediate_2}
\MoveEqLeft
    \pi_{\theta_{t+1}}^\top r - \pi_{\theta_t}^\top r \ge \Big\langle \frac{d \ \pi_{\theta_t}^\top r}{d \theta_t}, \theta_{t+1} - \theta_t \Big\rangle -  \frac{9}{4} \cdot \| r \|_\infty \cdot \lambda_{\max}(X^\top X) \cdot \| \theta_{t+1} - \theta_t \|_2^2 \\
    &= \Big( \eta  - \eta^2 \cdot \frac{9}{4} \cdot \| r \|_\infty \cdot \lambda_{\max}(X^\top X) \Big) \cdot \bigg\| \frac{d \ \pi_{\theta_t}^\top r}{d \theta_t} \bigg\|_2^2.
\end{align}
Using a constant learning rate,
\begin{align}
\label{eq:theta_t_l2_norm_approach_infinity_softmax_pg_intermediate_3}
    0< \eta < \frac{4}{ 9 \cdot \| r \|_\infty \cdot \lambda_{\max}(X^\top X)},
\end{align}
we have,
\begin{align}
\label{eq:theta_t_l2_norm_approach_infinity_softmax_pg_intermediate_4}
    \pi_{\theta_{t+1}}^\top r - \pi_{\theta_t}^\top r \ge \eta \cdot \Big( 1  - \eta \cdot \frac{9 \cdot \| r \|_\infty \cdot \lambda_{\max}(X^\top X)}{4}  \Big) \cdot \bigg\| \frac{d \ \pi_{\theta_t}^\top r}{d \theta_t} \bigg\|_2^2 \ge 0.
\end{align}
Note that $\pi_{\theta_t}^\top r \le r(a^*) < \infty$. According to the monotone convergence, $\pi_{\theta_t}^\top r \to c \le r(a^*)$ as $t \to \infty$. According to \cref{eq:theta_t_l2_norm_approach_infinity_softmax_pg_intermediate_4}, we have,
\begin{align}
\label{eq:theta_t_l2_norm_approach_infinity_softmax_pg_intermediate_5}
    \lim_{t \to \infty}{ \bigg\| \frac{d \ \pi_{\theta_t}^\top r}{d \theta_t} \bigg\|_2^2 } = 0.
\end{align}
Next, we prove that there is no stationary points in finite region by contradiction. Suppose there exists $\theta^\prime \in \sR^d$ ($\| \theta^\prime \|_2 < \infty$), such that,
\begin{align}
\label{eq:theta_t_l2_norm_approach_infinity_softmax_pg_intermediate_6}
    \frac{d \ \pi_{\theta^\prime}^\top r }{d \theta^\prime} = X^\top \left( \diagonalmatrix(\pi_{\theta^\prime}) - \pi_{\theta^\prime} \pi_{\theta^\prime}^\top \right) \ r = \rvzero.
\end{align}
Taking inner product with $w \in \sR^K$ on both sides of \cref{eq:theta_t_l2_norm_approach_infinity_softmax_pg_intermediate_6}, we have,
\begin{align}
\label{eq:theta_t_l2_norm_approach_infinity_softmax_pg_intermediate_7}
    w^\top X^\top \left( \diagonalmatrix(\pi_{\theta^\prime}) - \pi_{\theta^\prime} \pi_{\theta^\prime}^\top \right) \ r &= {r^\prime}^\top \left( \diagonalmatrix(\pi_{\theta^\prime}) - \pi_{\theta^\prime} \pi_{\theta^\prime}^\top \right) \ r  \qquad \left( r^\prime \coloneqq X w \right) \\
    &= w^\top \rvzero = 0.
\end{align}
Since $\| \theta^\prime \|_2 < \infty$ and $X$ is bounded ($\max_{i \in [K], \ j \in [d]}{ |X_{i,j}|} \le C$ for some $C < \infty$), we have, for all $i \in [K]$,
\begin{align}
\label{eq:theta_t_l2_norm_approach_infinity_softmax_pg_intermediate_8}
    \pi_{\theta^{\prime}}(i) = \frac{ \exp\{ [X \theta^\prime](i) \} }{ \sum_{j \in [K]}{ \exp\{ [X \theta^\prime](j) \} } } > 0.
\end{align}
Next, according to \cref{lem:alternative_expression_covariance}, we have,
\begin{align}
\label{eq:theta_t_l2_norm_approach_infinity_softmax_pg_intermediate_9}
    {r^\prime}^\top \left( \diagonalmatrix(\pi_{\theta^\prime}) - \pi_{\theta^\prime} \pi_{\theta^\prime}^\top \right) \ r =  \sum_{i=1}^{K-1}{ \pi_{\theta^\prime}(i) \cdot \sum_{j = i+1}^{K}{\pi_{\theta^\prime}(j) \cdot \left( r^\prime(i) - r^\prime(j) \right) \cdot \left( r(i) - r(j) \right) }  }.
\end{align}
Given any non-trivial reward vector, i.e., $r \not= c \cdot \rvone$ for any $c \in \sR$, since $r^\prime \in \sR^K$  preserves the order of $r \in \sR^K$, i.e., for all $i, j \in [K]$, $r(i) > r(j)$ iff $r^\prime(i) > r^\prime(j)$, we have, for all $i, j \in [K]$,
\begin{align}
\label{eq:theta_t_l2_norm_approach_infinity_softmax_pg_intermediate_10}
    \left( r^\prime(i) - r^\prime(j) \right) \cdot \left( r(i) - r(j) \right) \ge 0.
\end{align}
On the other hand, since $r \not= c \cdot \rvone$, there exists at least one pair of $i \not= j$, such that,
\begin{align}
\label{eq:theta_t_l2_norm_approach_infinity_softmax_pg_intermediate_11}
    \left( r^\prime(i) - r^\prime(j) \right) \cdot \left( r(i) - r(j) \right) > 0.
\end{align}
Combining \cref{eq:theta_t_l2_norm_approach_infinity_softmax_pg_intermediate_6,eq:theta_t_l2_norm_approach_infinity_softmax_pg_intermediate_7,eq:theta_t_l2_norm_approach_infinity_softmax_pg_intermediate_8,eq:theta_t_l2_norm_approach_infinity_softmax_pg_intermediate_9,eq:theta_t_l2_norm_approach_infinity_softmax_pg_intermediate_10,eq:theta_t_l2_norm_approach_infinity_softmax_pg_intermediate_11}, we have,
\begin{align}
\label{eq:theta_t_l2_norm_approach_infinity_softmax_pg_intermediate_12}
    0 = w^\top \rvzero &= w^\top \Big( \frac{d \ \pi_{\theta^\prime}^\top r }{d \theta^\prime} \Big) \\
    &= w^\top X^\top \left( \diagonalmatrix(\pi_{\theta^\prime}) - \pi_{\theta^\prime} \pi_{\theta^\prime}^\top \right) \ r \\
    &= {r^\prime}^\top \left( \diagonalmatrix(\pi_{\theta^\prime}) - \pi_{\theta^\prime} \pi_{\theta^\prime}^\top \right) \ r \\
    & > 0,
\end{align}
which is a contradiction. Thus we have, for any $\theta^\prime \in \sR^d$ ($\| \theta^\prime \|_2 < \infty$), $\theta^\prime$ is not a stationary point.

Next, we show that $\| \theta_t \|_2 \to \infty$ as $t \to \infty$ also by contradiction. Suppose there exists $C < \infty$, such that for all $t \ge 1$, 
\begin{align}
\label{eq:theta_t_l2_norm_approach_infinity_softmax_pg_intermediate_13}
    \theta_t \in S_C \coloneqq \{ \theta \in \sR^d: \| \theta \|_2 \le C \}.
\end{align}
From the above arguments, we have, for all $\theta \in S_C$, $\Big\| \frac{d \ \pi_{\theta}^\top r }{d \theta} \Big\|_2 > 0$. Since $S_C$ is compact, we have,
\begin{align}
\label{eq:theta_t_l2_norm_approach_infinity_softmax_pg_intermediate_14}
    \inf_{\theta \in S_C}{ \bigg\| \frac{d \ \pi_{\theta}^\top r }{d \theta} \bigg\|_2 } \ge \varepsilon > 0,
\end{align}
for some $\varepsilon > 0$, which implies that, for all $t \ge 1$,
\begin{align}
\label{eq:theta_t_l2_norm_approach_infinity_softmax_pg_intermediate_15}
    \bigg\| \frac{d \ \pi_{\theta_t}^\top r }{d \theta_t} \bigg\|_2 \ge \varepsilon > 0,
\end{align}
contradicting \cref{eq:theta_t_l2_norm_approach_infinity_softmax_pg_intermediate_5}. Therefore, we have, $\| \theta_t \|_2 \to \infty$ as $t \to \infty$.

Next, we show that $\pi_{\theta_t}(i) \to 1$ for an action $i \in [K]$ as $t \to \infty$. Suppose $\pi_{\theta_t}(i) \not\to 1$ for any action $i \in [K]$, then there exists at least two different actions $j \not= k$ such that $\pi_{\theta_t}(j) \not\to 0$ and $\pi_{\theta_t}(k) \not\to 0$. Using similar calculations in \cref{eq:theta_t_l2_norm_approach_infinity_softmax_pg_intermediate_9}, we have, $\Big\| \frac{d \ \pi_{\theta_t}^\top r }{d \theta_t} \Big\|_2 \not\to 0$ as $t \to \infty$, contradicting \cref{eq:theta_t_l2_norm_approach_infinity_softmax_pg_intermediate_5}. Therefore, $\pi_{\theta_t}(i) \to 1$ for an action $i \in [K]$ as $t \to \infty$, i.e. $\pi_{\theta_t}$ approaches a one-hot policy.
\end{proof}

\textbf{\cref{lem:non_negative_covariance_between_vectors_with_same_order}} (Non-negative co-variance of order preservation)\textbf{.}
If $r^\prime \in \sR^K$  preserves the order of $r \in \sR^K$, i.e., for all $i, j \in [K]$, $r(i) > r(j)$ if and only if $r^\prime(i) > r^\prime(j)$, then for any policy $\pi \in \Delta(K)$,
\begin{align}
\label{eq:non_negative_covariance_appendix}
    {r^\prime}^\top \left( \diagonalmatrix(\pi) - \pi \pi^\top \right) r = \cov\nolimits_{\pi}{(r^\prime, r)} \ge 0.
\end{align}
\begin{proof}
According to \cref{lem:alternative_expression_covariance}, we have, for all policy $\pi \in \Delta(K)$,
\begin{align}
\label{eq:non_negative_covariance_between_vectors_with_same_order_intermediate_1}
    {r^\prime}^\top \left( \diagonalmatrix(\pi) - \pi \pi^\top \right) r = \sum_{i=1}^{K-1}{ \pi(i) \cdot \sum_{j = i+1}^{K}{\pi(j) \cdot \left( r^\prime(i) - r^\prime(j) \right) \cdot \left( r(i) - r(j) \right) }  }.
\end{align}
Since $r^\prime \in \sR^K$  preserves the order of $r \in \sR^K$, i.e., for all $i, j \in [K]$, $r(i) > r(j)$ if and only if $r^\prime(i) > r^\prime(j)$, we have, for all $i \not= j$,
\begin{align}
\label{eq:non_negative_covariance_between_vectors_with_same_order_intermediate_2}
    \left( r^\prime(i) - r^\prime(j) \right) \cdot \left( r(i) - r(j) \right) \ge 0.
\end{align}
Combining \cref{eq:non_negative_covariance_between_vectors_with_same_order_intermediate_1,eq:non_negative_covariance_between_vectors_with_same_order_intermediate_2}, we have \cref{eq:non_negative_covariance_appendix}.
\end{proof}

\textbf{\cref{thm:reward_order_preserving_condition_softmax_pg}} (Reward order preservation, non-domination features)\textbf{.}
Given any reward $r \in \sR^K$ and feature matrix $X \in \sR^{K \times d}$. Denote $x_i \in \sR^{d}$ as the $i$-th row vector of $X$. If \textbf{(i)} $x_i^\top x_i > x_i^\top x_j$ for all $j \not= i$, and \textbf{(ii)} there exists at least one $w \in \sR^d$, s.t., $r^\prime \coloneqq X w$ preserves the order of $r$, i.e., for all $i, j \in [K]$, $r(i) > r(j)$ if and only if $r^\prime(i) > r^\prime(j)$, then for any initialization $\theta_1 \in \sR^d$, \cref{alg:softmax_policy_gradient} with a constant learning rate $\eta > 0$ achieves
global convergence of $\pi_{\theta_t}^\top r \to r(a^*)$ as $t \to \infty$.
\begin{proof}
\textbf{First part.} According to \cref{lem:theta_t_l2_norm_approach_infinity_softmax_pg}, using any constant learning rate,
\begin{align}
\label{eq:reward_order_preserving_condition_softmax_pg_first_part_intermediate_2}
    0< \eta < \frac{4}{ 9 \cdot \| r \|_\infty \cdot \lambda_{\max}(X^\top X)},
\end{align}
\cref{alg:softmax_policy_gradient} guarantees that $\| \theta_t \|_2 \to \infty$ as $t \to \infty$, and 
$\pi_{\theta_t}(i) \to 1$ for an action $i \in [K]$ as $t \to \infty$.

\textbf{Second part.} For the direction $w \in \sR^d$ such that $r^\prime \coloneqq X w $ preserves the order of $r$. We have,
\begin{align}
\label{eq:reward_order_preserving_condition_softmax_pg_first_part_intermediate_1}
    w^\top \theta_{t+1} &= w^\top \theta_t + \eta \cdot w^\top X^\top \left( \diagonalmatrix{ ( \pi_{\theta_t} ) } - \pi_{\theta_t} \pi_{\theta_t}^\top \right) r \qquad \left( \text{by \cref{alg:softmax_policy_gradient}} \right) \\
    &= w^\top \theta_t + \eta \cdot {r^\prime}^\top \left( \diagonalmatrix{ ( \pi_{\theta_t} ) } - \pi_{\theta_t} \pi_{\theta_t}^\top \right) r \qquad \left( r^\prime \coloneqq X w \right) \\
    &\ge w^\top \theta_t. \qquad \left( \text{by \cref{lem:non_negative_covariance_between_vectors_with_same_order}} \right)
\end{align}

\textbf{Third part.} Suppose there exists a sub-optimal action $i \in [K]$ with $r(i) < r(a^*)$, and $\pi_{\theta_t}(i) \to 1$ as $t \to \infty$. Then we have, for all large enough $t \ge 1$,
\begin{align}
\label{eq:reward_order_preserving_condition_softmax_pg_first_part_intermediate_3}
    \pi_{\theta_t}(i) > \pi_{\theta_t}(j),
\end{align}
for all $j \not= i$, which implies that,
\begin{align}
\label{eq:reward_order_preserving_condition_softmax_pg_first_part_intermediate_4}
    \bigg[\frac{X \theta_t}{\| \theta_t \|_2} \bigg](i) = \max_{a \in [K]} \bigg[\frac{X \theta_t}{\| \theta_t \|_2} \bigg](a).
\end{align}
Now we prove by contradiction that the assumption of $\pi_{\theta_t}(i) \to 1$ as $t \to \infty$ cannot be true for any sub-optimal action $i \in [K]$ with $r(i) < r(a^*)$.

Using the sub-optimal action's reward $r(i)$, the action set $[K]$ can be partitioned as follows,
\begin{align}
\label{eq:reward_order_preserving_condition_softmax_pg_first_part_intermediate_5a}
    \gA(i) &\coloneqq \left\{ j \in [K]: r(j) = r(i) \right\}, \\
\label{eq:reward_order_preserving_condition_softmax_pg_first_part_intermediate_5b}
    \gA^+(i) &\coloneqq \left\{ a^+ \in [K]: r(a^+) > r(i) \right\}, \\
\label{eq:reward_order_preserving_condition_softmax_pg_first_part_intermediate_5c}
    \gA^-(i) &\coloneqq \left\{ a^- \in [K]: r(a^-) < r(i) \right\}.
\end{align}
According to \cref{eq:reward_order_preserving_condition_softmax_pg_first_part_intermediate_4}, for all large enough $t \ge 1$, $i = \argmax_{a \in [K]} [ X \theta_t ](a)$. Take the second largest component of $X \theta_t$, and denote the corresponding action index as $j$.

\paragraph{Case 1.} $j \in \gA^+(i)$. This means $j = a^+$ for a ``good'' action with $r(a^+) > r(i)$.

We have, for all large enough $t \ge 1$, for all ``bad'' action $a^- \in \gA^-(i)$,
\begin{align}
\label{eq:reward_order_preserving_condition_softmax_pg_first_part_intermediate_6}
    [ X \theta_t ](a^+) - [ X \theta_t ](a^-) &= \left\| \theta_t \right\|_2 \cdot \left( \bigg[\frac{X \theta_t}{\| \theta_t \|_2} \bigg](a^+) - \bigg[\frac{X \theta_t}{\| \theta_t \|_2} \bigg](a^-) \right) \\
    &\ge c \cdot \left\| \theta_t \right\|_2,
\end{align}
for some $c > 0$ according to $j = a^+$. Next, we have,
\begin{align}
\label{eq:reward_order_preserving_condition_softmax_pg_first_part_intermediate_7}
    \frac{ \pi_{\theta_t}(a^+) }{ \pi_{\theta_t}(a^-) } = \exp\big\{ [ X \theta_t ](a^+) - [ X \theta_t ](a^-) \big\} \ge \exp\big\{ c \cdot \| \theta_t \|_2 \big\},
\end{align}
which implies that,
\begin{align}
\label{eq:reward_order_preserving_condition_softmax_pg_first_part_intermediate_8}
\MoveEqLeft
    r(i) - \pi_{\theta_t}^\top r = \sum_{k \not= i}{ \pi_{\theta_t}(k) \cdot \left( r(i) - r(k) \right) } \\
    &= - \sum_{\tilde{a}^+ \in \gA^+(i)}{ \pi_{\theta_t}(\tilde{a}^+) \cdot \left( r(\tilde{a}^+) - r(i) \right) } + \sum_{a^- \in \gA^-(i)}{ \pi_{\theta_t}(a^-) \cdot \left( r(i) - r(a^-) \right) } \\
    &\le - \pi_{\theta_t}(a^+) \cdot \left( r(a^+) - r(i) \right) + \sum_{a^- \in \gA^-(i)}{ \pi_{\theta_t}(a^-) \cdot \left( r(i) - r(a^-) \right) } \\
    &= - \pi_{\theta_t}(a^+) \cdot \bigg[ \underbrace{ r(a^+) - r(i) }_{ > 0} -  \sum_{a^- \in \gA^-(i)}{ \underbrace{ \frac{ \pi_{\theta_t}(a^-) }{ \pi_{\theta_t}(a^+)  } }_{ \to 0} \cdot \left( r(i) - r(a^-) \right) }  \bigg] \\
    &< 0,
\end{align}
where $r(a^+) - r(i) > 0$ is from \cref{eq:reward_order_preserving_condition_softmax_pg_first_part_intermediate_5b}, $\frac{ \pi_{\theta_t}(a^-) }{ \pi_{\theta_t}(a^+)  } \to 0$ as $t \to \infty$ is by  \cref{eq:reward_order_preserving_condition_softmax_pg_first_part_intermediate_7,lem:theta_t_l2_norm_approach_infinity_softmax_pg}. \cref{eq:reward_order_preserving_condition_softmax_pg_first_part_intermediate_8} means that $\pi_{\theta_t}^\top r > r(i)$ happens at a finite time $t < \infty$. According to \cref{eq:theta_t_l2_norm_approach_infinity_softmax_pg_intermediate_4}, we have, for all large enough $t \ge 1$, $\pi_{\theta_t}^\top r > r(i)$, which is a contradiction with $\pi_{\theta_t}^\top r \to r(i)$ implied by the assumption of $\pi_{\theta_t}(i) \to 1$ as $t \to \infty$.

\paragraph{Case 2.}  $j \in \gA^-(i)$. This means $j = a^-$ for a ``bad'' action $r(a^-) < r(i)$.

Using similar arguments around \cref{eq:reward_order_preserving_condition_softmax_pg_first_part_intermediate_7}, we have, for all $a \in [K]$ such that $a \not= i$ and $a \not= a^-$,
\begin{align}
\label{eq:reward_order_preserving_condition_softmax_pg_first_part_intermediate_9}
    \frac{ \pi_{\theta_t}(a^-) }{ \pi_{\theta_t}(a) } = \exp\big\{ [ X \theta_t ](a^-) - [ X \theta_t ](a) \big\} \ge \exp\big\{ c \cdot \| \theta_t \|_2 \big\},
\end{align}
for some $c > 0$. Consider a direction $u \in \sR^d$, $\| u \|_2 = 1$, such that,
\begin{align}
\label{eq:reward_order_preserving_condition_softmax_pg_first_part_intermediate_10}
    [ X u ](a^-) = \max_{a \in [K]} [ X u ](a).
\end{align}
According to \cref{alg:softmax_policy_gradient}, we have,
\begin{align}
\label{eq:reward_order_preserving_condition_softmax_pg_first_part_intermediate_11}
    u^\top \theta_{t+1} &= u^\top \theta_t + \eta \cdot u^\top X^\top \left( \diagonalmatrix{ ( \pi_{\theta_t} ) } - \pi_{\theta_t} \pi_{\theta_t}^\top \right) \ r \\
    &= u^\top \theta_t + \eta \cdot u^\top X^\top \left( \diagonalmatrix{ ( \pi_{\theta_t} ) } - \pi_{\theta_t} \pi_{\theta_t}^\top \right) \left( r - r(i) \cdot \rvone \right),
\end{align}
where the last equation is because of,
\begin{align}
\label{eq:reward_order_preserving_condition_softmax_pg_first_part_intermediate_12}
    \left( \diagonalmatrix{ ( \pi_{\theta_t} ) } - \pi_{\theta_t} \pi_{\theta_t}^\top \right) \ \rvone = \pi_{\theta_t} - \pi_{\theta_t} \cdot (\pi_{\theta_t}^\top \rvone) = \pi_{\theta_t} - \pi_{\theta_t} = \rvzero.
\end{align}
Denote $y \coloneqq Xu$. We have,
\begin{align}
\label{eq:reward_order_preserving_condition_softmax_pg_first_part_intermediate_13}
    \left( \diagonalmatrix{ ( \pi_{\theta_t} ) } - \pi_{\theta_t} \pi_{\theta_t}^\top \right) X u = \begin{bmatrix} \pi_{\theta_t}(1) \cdot \left( y(1) - \pi_{\theta_t}^\top y \right) \vspace{1ex}\\
    \pi_{\theta_t}(2) \cdot \left( y(2) - \pi_{\theta_t}^\top y \right)  \vspace{1ex}\\
    \vdots \vspace{1ex} \\
    \pi_{\theta_t}(K) \cdot \left( y(K) - \pi_{\theta_t}^\top y \right)  \end{bmatrix} \in \sR^K.
\end{align}
Therefore, from \cref{eq:reward_order_preserving_condition_softmax_pg_first_part_intermediate_11,eq:reward_order_preserving_condition_softmax_pg_first_part_intermediate_13}, we have,
\begin{align}
\label{eq:reward_order_preserving_condition_softmax_pg_first_part_intermediate_14}
\MoveEqLeft
    u^\top X^\top \left( \diagonalmatrix{ ( \pi_{\theta_t} ) } - \pi_{\theta_t} \pi_{\theta_t}^\top \right) \ r = \sum_{a \not= i}{ \pi_{\theta_t}(a) \cdot  \left( y(a) - \pi_{\theta_t}^\top y \right) \cdot \left( r(a) - r(i) \right) } \\
    &= \sum_{a \not= i, \ a \not= a^-}{ \pi_{\theta_t}(a) \cdot \left( y(a) - \pi_{\theta_t}^\top y \right) \cdot \left( r(a) - r(i) \right) } + \pi_{\theta_t}(a^-) \cdot \left( y(a^-) - \pi_{\theta_t}^\top y \right) \cdot \left( r(a^-) - r(i) \right) \\
    &= - \pi_{\theta_t}(a^-) \cdot \bigg[ \big( \underbrace{ y(a^-) - \pi_{\theta_t}^\top y }_{ > 0} \big) \cdot \big( \underbrace{ r(i) - r(a^-) }_{ > 0} \big) - \sum_{\substack{a \not= i, \\ a \not= a^-}}{ \underbrace{\frac{ \pi_{\theta_t}(a)}{\pi_{\theta_t}(a^-)}}_{ \to 0} \cdot \big( \underbrace{ y(a) - \pi_{\theta_t}^\top y}_{\text{bounded}}\big) \cdot \left( r(a) - r(i) \right) } \bigg] \\
    &< 0,
\end{align}
where $y(a^-) - \pi_{\theta_t}^\top y > 0$ is by \cref{eq:reward_order_preserving_condition_softmax_pg_first_part_intermediate_10}, $r(i) - r(a^-) > 0$ is from \cref{eq:reward_order_preserving_condition_softmax_pg_first_part_intermediate_5c}, $\frac{ \pi_{\theta_t}(a)}{\pi_{\theta_t}(a^-)} \to 0$ as $t \to \infty$ is according to \cref{eq:reward_order_preserving_condition_softmax_pg_first_part_intermediate_9,lem:theta_t_l2_norm_approach_infinity_softmax_pg}, and $y(a) - \pi_{\theta_t}^\top y$ is bounded is because of $X$ and $u$ are bounded ($\max_{i \in [K], \ j \in [d]}{ |X_{i,j}|} \le C$, and $\max_{ j \in [d]}{ |u(j)|} \le C$ for some $C < \infty$).

Combining \cref{eq:reward_order_preserving_condition_softmax_pg_first_part_intermediate_11,eq:reward_order_preserving_condition_softmax_pg_first_part_intermediate_14}, we have, for all large enough $t \ge 1$,
\begin{align}
\label{eq:reward_order_preserving_condition_softmax_pg_first_part_intermediate_15}
    u^\top \theta_{t+1} &= u^\top \theta_t + \eta \cdot u^\top X^\top \left( \diagonalmatrix{ ( \pi_{\theta_t} ) } - \pi_{\theta_t} \pi_{\theta_t}^\top \right) \ r \\
    &< u^\top \theta_t,
\end{align}
which implies that $u^\top \theta_t \to - \infty$ as $t \to \infty$ according to \cref{lem:theta_t_l2_norm_approach_infinity_softmax_pg}. On the other hand, according to \cref{eq:reward_order_preserving_condition_softmax_pg_first_part_intermediate_1}, we have, for all large enough $t \ge 1$,
\begin{align}
\label{eq:reward_order_preserving_condition_softmax_pg_first_part_intermediate_16}
    w^\top \theta_{t+1} >  w^\top \theta_{t},
\end{align}
which implies that $w^\top \theta_t \to \infty$ as $t \to \infty$ according to \cref{lem:theta_t_l2_norm_approach_infinity_softmax_pg}. According to the non-domination feature condition, i.e., $x_i^\top x_i > x_i^\top x_j$ for all $i \not= j$, we have, for any ``bad'' action $a^- \in \gA^-(i)$,
\begin{align}
    [ X x_{a^-} ](a^-) = \max_{a \in [K]} [ X x_{a^-} ](a),
\end{align}
which implies that,
\begin{align}
    [ X \theta_{t+1} ](a^-) = x_{a^-}^\top \theta_{t+1} < x_{a^-}^\top \theta_t = [ X \theta_t ](a^-),
\end{align}
by taking $u = x_{a^-}$ in \cref{eq:reward_order_preserving_condition_softmax_pg_first_part_intermediate_15}. This means the score of a ``bad'' action $a^- \in \gA^-(i)$ is monotonically decreasing, and it approaches $- \infty$ due to \cref{lem:theta_t_l2_norm_approach_infinity_softmax_pg}. On the other hand, take $w = x_{a^*}$ in \cref{eq:reward_order_preserving_condition_softmax_pg_first_part_intermediate_16} (or take $u = - x_{a^*}$ in \cref{eq:reward_order_preserving_condition_softmax_pg_first_part_intermediate_15}), we have,
\begin{align}
    [ X \theta_{t+1} ](a^*) = x_{a^*}^\top \theta_{t+1} > x_{a^*}^\top \theta_t = [ X \theta_t ](a^*),
\end{align}
which means the optimal action's score is monotonically increasing, it approaches $\infty$ due to \cref{lem:theta_t_l2_norm_approach_infinity_softmax_pg}. Therefore, we have, for all large enough $t \ge 1$, 
\begin{align}
    [ X \theta_t ](a^*) > [ X \theta_t ](a^-),
\end{align}
for any ``bad'' action $a^- \in \gA^-(i)$, contradicting the assumption of $j = a^- \in \gA^-(i)$.
\end{proof}

\textbf{\cref{prop:counterexample_linearly_realizable_reward}.}
Let $K = 3$, $d = 2$, $X^\top = \begin{bmatrix} 0 & -10 & 0 \ \vspace{0.5ex}\\
    -2 & 4 & 1 \ \end{bmatrix}  \in \sR^{d \times K}$, and $r = X w = \left(4, 2, -2 \right)^\top $, where $w = (-1, -2)^\top \in \sR^d$. With initialization $\theta_1 = (- \ln{2}, \ln{2})^\top$, \cref{alg:softmax_policy_gradient} does not achieve global convergence, i.e., $\pi_{\theta_t}(1) \not\to 1$ as $t \to \infty$.
\begin{proof}
Define the following region,
\begin{align}
    \gR \coloneqq \bigg\{ \theta \in \sR^d: \frac{ \pi_{\theta}(1) }{ \pi_{\theta}(3) } < \frac{1}{2}, \ \text{and} \ \frac{ \pi_{\theta}(2) }{ \pi_{\theta}(3) } > 3 \bigg\}.
\end{align}
We show that, \textbf{(i)} $\theta_1 = (- \ln{2}, \ln{2})^\top \in \gR$, and \textbf{(ii)} if $\theta_t \in \gR$, then $\theta_{t+1} \in \gR$, and,
\begin{align}
    \frac{ \pi_{\theta_{t+1}}(1) }{ \pi_{\theta_{t+1}}(3) } < \frac{ \pi_{\theta_t}(1) }{ \pi_{\theta_t}(3) },
\end{align}
which means that, for all $t \ge 1$, we have, $\frac{ \pi_{\theta_t}(1) }{ \pi_{\theta_t}(3) } < 1/2$, implying that $\pi_{\theta_t}(1) \not\to 1$ as $t \to \infty$.

\textbf{First part.} \textbf{(i)} $\theta_1 = (- \ln{2}, \ln{2})^\top \in \gR$.

For the initialization $\theta_1 = (- \ln{2}, \ln{2})^\top$, we have,
\begin{align}
    \exp\{ X \theta_1 \} = (2^{-2}, 2^{14}, 2)^\top.
\end{align}
By calculation, we have,
\begin{align}
    \frac{ \pi_{\theta_1}(1) }{ \pi_{\theta_1}(3) } &= \frac{ \exp\{ [X \theta_1](1) \}}{ \exp\{ [X \theta_1](3) \} }
    = \frac{1}{8} < \frac{1}{4} \cdot \frac{ r(2) - r(3) }{ r(1) - r(2)} = \frac{1}{2}, \ \text{and} \\
    \frac{ \pi_{\theta_1}(2) }{ \pi_{\theta_1}(3) } &= \frac{ \exp\{ [X \theta_1](2) \}}{ \exp\{ [X \theta_1](3) \} } = 2^{13} > \frac{ r(1) - r(3) }{ r(1) - r(2)} = 3,
\end{align}
which verifies that $\theta_1 = (- \ln{2}, \ln{2})^\top \in \gR$.

\textbf{Second part.} \textbf{(ii)} If $\theta_t \in \gR$, then $\theta_{t+1} \in \gR$, and $\frac{ \pi_{\theta_{t+1}}(1) }{ \pi_{\theta_{t+1}}(3) } < \frac{ \pi_{\theta_t}(1) }{ \pi_{\theta_t}(3) }$.

Suppose $\theta_t \in \gR$, we have,
\begin{align}
    \frac{ \pi_{\theta_t}(1) }{ \pi_{\theta_t}(3) } < \frac{1}{2}, \ \text{and} \ \frac{ \pi_{\theta_t}(2) }{ \pi_{\theta_t}(3) } > 3.
\end{align}
Next, we have,
\begin{align}
    r(2) - \pi_{\theta_t}^\top r &= \pi_{\theta_t}(1) \cdot ( r(2) - r(1)) + \pi_{\theta_t}(3) \cdot ( r(2) - r(3)) \\
    &= -2 \cdot \pi_{\theta_t}(1)  + 4 \cdot \pi_{\theta_t}(3) \\
    &= 2 \cdot \pi_{\theta_t}(3) \cdot \left( - \frac{ \pi_{\theta_t}(1) }{ \pi_{\theta_t}(3) } + 2 \right) > 0,
\end{align}
and
\begin{align}
    \frac{r(1) - \pi_{\theta_t}^\top r}{  \pi_{\theta_t}^\top r - r(3) } &= \frac{ \pi_{\theta_t}(2) \cdot ( r(1) - r(2)) + \pi_{\theta_t}(3) \cdot ( r(1) - r(3))  }{ \pi_{\theta_t}(1) \cdot ( r(1) - r(3)) + \pi_{\theta_t}(2) \cdot ( r(2) - r(3))  } \\
    &= \frac{ 2 \cdot \pi_{\theta_t}(2) + 6 \cdot \pi_{\theta_t}(3) }{ 6 \cdot \pi_{\theta_t}(1) + 4 \cdot\pi_{\theta_t}(2)  } \\
    &< \frac{ 2 \cdot \pi_{\theta_t}(2) + 6 \cdot \pi_{\theta_t}(3) }{ 4 \cdot\pi_{\theta_t}(2)  } \\
    &= \frac{1}{2} + \frac{3}{2} \cdot \frac{ \pi_{\theta_t}(3) }{ \pi_{\theta_t}(2) } < 1.
\end{align}
According to \cref{alg:softmax_policy_gradient}, we have,
\begin{align}
    \theta_{t+1} - \theta_t &= \eta \cdot X^\top \left( \diagonalmatrix{ ( \pi_{\theta_t} ) } - \pi_{\theta_t} \pi_{\theta_t}^\top \right) \ r \\
    &= \eta \cdot \begin{bmatrix} 0 & -10 & 0 \ \vspace{0.5ex}\\
    -2 & 4 & 1 \ \end{bmatrix} \begin{bmatrix} \pi_{\theta_t}(1) \cdot \left( r(1) - \pi_{\theta_t}^\top r \right) \vspace{1ex}\\
    \pi_{\theta_t}(2) \cdot \left( r(2) - \pi_{\theta_t}^\top r \right)  \vspace{1ex}\\
    \pi_{\theta_t}(3) \cdot \left( r(3) - \pi_{\theta_t}^\top r \right)  \end{bmatrix} \\
    &= \eta \cdot \begin{bmatrix}  -10 \cdot  \pi_{\theta_t}(2) \cdot \left( r(2) - \pi_{\theta_t}^\top r \right) \vspace{1ex}\\
    -2 \cdot \pi_{\theta_t}(1) \cdot \left( r(1) - \pi_{\theta_t}^\top r \right) + 4 \cdot \pi_{\theta_t}(2) \cdot \left( r(2) - \pi_{\theta_t}^\top r \right) + \pi_{\theta_t}(3) \cdot \left( r(3) - \pi_{\theta_t}^\top r \right) \end{bmatrix}.
\end{align}
Next, we have,
\begin{align}
    &-2 \cdot \pi_{\theta_t}(1) \cdot \left( r(1) - \pi_{\theta_t}^\top r \right) + 4 \cdot \pi_{\theta_t}(2) \cdot \left( r(2) - \pi_{\theta_t}^\top r \right) + \pi_{\theta_t}(3) \cdot \left( r(3) - \pi_{\theta_t}^\top r \right) \\
    &\quad = -6 \cdot \pi_{\theta_t}(1) \cdot \left( r(1) - \pi_{\theta_t}^\top r \right) - 3 \cdot \pi_{\theta_t}(3) \cdot \left( r(3) - \pi_{\theta_t}^\top r \right) \\
    &\quad = 3 \cdot \pi_{\theta_t}(3) \cdot \left( \pi_{\theta_t}^\top r - r(3) \right) \cdot \bigg[ - 2 \cdot \frac{ \pi_{\theta_t}(1)}{\pi_{\theta_t}(3) } \cdot \frac{ r(1) - \pi_{\theta_t}^\top r}{ \pi_{\theta_t}^\top r - r(3)  }  + 1  \bigg] \\
    &\quad > 3 \cdot \pi_{\theta_t}(3) \cdot \left( \pi_{\theta_t}^\top r - r(3) \right) \cdot \bigg( -2 \cdot \frac{1}{2} \cdot 1  + 1  \bigg) = 0,
\end{align}
which implies that,
\begin{align}
    \theta_{t+1}(2) > \theta_t(2).
\end{align}
Therefore, we have,
\begin{align}
    \frac{ \pi_{\theta_{t+1}}(1) }{ \pi_{\theta_{t+1}}(3) } &= \frac{ \exp\{ [X \theta_{t+1}](1) \}}{ \exp\{ [X \theta_{t+1}](3) \} } = \frac{ \exp\{ -2 \cdot \theta_{t+1}(2) \}}{ \exp\{ \theta_{t+1}(2) \} } < \frac{ \exp\{ -2 \cdot \theta_t(2) \}}{ \exp\{ \theta_t(2) \} } = \frac{ \pi_{\theta_t}(1) }{ \pi_{\theta_t}(3) } < \frac{1}{2}.
\end{align}
On the other hand, we have,
\begin{align}
    -10 \cdot  \pi_{\theta_t}(2) \cdot \left( r(2) - \pi_{\theta_t}^\top r \right) < 0,
\end{align}
which implies that,
\begin{align}
    \theta_{t+1}(1) < \theta_t(1).
\end{align}
Therefore, we have,
\begin{align}
    \frac{ \pi_{\theta_{t+1}}(2) }{ \pi_{\theta_{t+1}}(3) } &= \frac{ \exp\{ [X \theta_{t+1}](2) \}}{ \exp\{ [X \theta_{t+1}](3) \} } = \frac{ \exp\{ -10 \cdot \theta_{t+1}(1) + 4 \cdot \theta_{t+1}(2) \}}{ \exp\{ \theta_{t+1}(2) \} } \\
    &> \frac{ \exp\{ [X \theta_{t+1}](2) \}}{ \exp\{ [X \theta_{t+1}](3) \} } = \frac{ \exp\{ -10 \cdot \theta_t(1) + 4 \cdot \theta_t(2) \}}{ \exp\{ \theta_t(2) \} } \\
    &= \frac{ \pi_{\theta_t}(2) }{ \pi_{\theta_t}(3) } > 3,
\end{align}
which proves that $\theta_{t+1} \in \gR$ and $\frac{ \pi_{\theta_{t+1}}(1) }{ \pi_{\theta_{t+1}}(3) } < \frac{ \pi_{\theta_t}(1) }{ \pi_{\theta_t}(3) }$.
\end{proof}

\textbf{\cref{thm:optimal_action_preserving_condition_npg}} (Optimal action preservation condition)\textbf{.}
For a constant learning rate $\eta > 0$,
a necessary and sufficient condition for \cref{alg:natural_policy_gradient} to achieve global convergence $\pi_{\theta_t}^\top r \to r(a^*)$ as $t \to \infty$ from any initialization $\theta_1 \in \sR^d$ is that $\hat{r}(a^*) > \hat{r}(a)$ for all $a \not= a^*$, 
such that $a^* \coloneqq \argmax_{a \in [K]}{r(a)}$, and $\hat{r} \coloneqq X \left( X^\top X \right)^{-1} X^\top r $ is the least squares projection of $r$ onto the column space of $X$. 
If the condition is satisfied, then the rate of convergence is $\left( \pi^* - \pi_{\theta_t}\right)^\top r \in O(e^{-c \cdot t})$ for some $c > 0$.
\begin{proof}
\textbf{First part.} Sufficiency. Suppose that $\hat{r}(a^*) > \hat{r}(a)$ for all $a \not= a^*$. Denote
\begin{align}
\label{eq:optimal_action_preserving_condition_npg_intermediate_1}
    \hat{\Delta} \coloneqq \hat{r}(a^*) - \max_{a \not= a^*}{ \hat{r}(a) }.
\end{align}
According to \cref{alg:natural_policy_gradient}, we have, for all $t \ge 1$,
\begin{align}
\label{eq:optimal_action_preserving_condition_npg_intermediate_2}
    X \theta_{t+1} = X \theta_t + \eta \cdot X \left( X^\top X \right)^{-1} X^\top r  = X \theta_t + \eta \cdot \hat{r}. 
\end{align}
Next, we have, for all $a \not= a^*$,
\begin{align}
\label{eq:optimal_action_preserving_condition_npg_intermediate_3}
    \frac{ \pi_{\theta_{t+1}}(a^*) }{ \pi_{\theta_{t+1}}(a)} &= \exp\big\{ [X \theta_{t+1}](a^*) - [X \theta_{t+1}](a) \big\} \qquad \left( \text{by \cref{eq:log_linear_policy}} \right) \\
    &= \exp\big\{ [X \theta_{t}](a^*) - [X \theta_{t}](a) + \eta \cdot \left( \hat{r}(a^*) - \hat{r}(a) \right) \big\} \qquad \left( \text{by \cref{eq:optimal_action_preserving_condition_npg_intermediate_2}} \right) \\
    &= \exp\big\{ [X \theta_{1}](a^*) - [X \theta_{1}](a) + \eta \cdot \left( \hat{r}(a^*) - \hat{r}(a) \right) \cdot t \big\} \\
    &\ge \exp\big\{ [X \theta_{1}](a^*) - [X \theta_{1}](a) + \eta \cdot \hat{\Delta} \cdot t \big\} \qquad \left( \text{by \cref{eq:optimal_action_preserving_condition_npg_intermediate_1}} \right) \\
    &= \frac{ \pi_{\theta_{1}}(a^*) }{ \pi_{\theta_{1}}(a) } \cdot e^{\eta \cdot \hat{\Delta} \cdot t },
\end{align}
which implies that,
\begin{align}
\label{eq:optimal_action_preserving_condition_npg_intermediate_4}
    \frac{1}{ \pi_{\theta_{t}}(a^*) } - 1 &= \sum_{a \not= a^*} \frac{ \pi_{\theta_{t}}(a) }{ \pi_{\theta_{t}}(a^*) } \\
    &\le \sum_{a \not= a^*} \frac{ \pi_{\theta_{1}}(a) }{ \pi_{\theta_{1}}(a^*) }  \cdot e^{ - \eta \cdot \hat{\Delta} \cdot (t-1) } \qquad \left( \text{by \cref{eq:optimal_action_preserving_condition_npg_intermediate_3}} \right) \\
    &\le c(X, \theta_1) \cdot K \cdot e^{ - \eta \cdot \hat{\Delta} \cdot (t-1) },
\end{align}
where
\begin{align}
\label{eq:optimal_action_preserving_condition_npg_intermediate_5}
    c(X, \theta_1) \coloneqq \max_{a \not= a^*} \frac{ \pi_{\theta_{1}}(a) }{ \pi_{\theta_{1}}(a^*) }.
\end{align}
Therefore, we have,
\begin{align}
\label{eq:optimal_action_preserving_condition_npg_intermediate_6}
    \left( \pi^* - \pi_{\theta_t} \right)^\top r &= \sum_{a \not= a^*}{ \pi_{\theta_t}(a) \cdot \left( r(a^*) - r(a) \right) } \\
    &\le 2 \cdot \| r \|_\infty \cdot \left( 1 - \pi_{\theta_{t}}(a^*) \right) \qquad\left( \text{using } r \in \big[ - \| r \|_\infty,  \| r \|_\infty \big]^K \right) \\
    &= 2 \cdot \| r \|_\infty \cdot \bigg( 1 - \frac{ 1 }{\frac{1}{ \pi_{\theta_{t}}(a^*) } - 1 + 1} \bigg) \\
    &\le 2 \cdot \| r \|_\infty \cdot \bigg( 1 - \frac{ 1 }{c(X, \theta_1) \cdot K \cdot e^{ - \eta \cdot \hat{\Delta} \cdot (t-1) } + 1} \bigg) \qquad \left( \text{by \cref{eq:optimal_action_preserving_condition_npg_intermediate_4}} \right) \\
    &= \frac{2 \cdot \| r \|_\infty \cdot c(X, \theta_1) \cdot K }{c(X, \theta_1) \cdot K + \exp\big\{ \eta \cdot \hat{\Delta} \cdot (t-1) \big\} },
\end{align}
which proves the sufficiency and the convergence rate $\left( \pi^* - \pi_{\theta_t}\right)^\top r \in O(e^{-c \cdot t})$ for some $c > 0$.

\textbf{Second part.} Necessity. Suppose the condition is not satisfied, i.e., there exists one sub-optimal action $a \not= a^*$, such that $\hat{r}(a^*) \le \hat{r}(a)$. We have, for all $t \ge 1$,
\begin{align}
\label{eq:optimal_action_preserving_condition_npg_intermediate_7}
    \frac{ \pi_{\theta_{t+1}}(a^*) }{ \pi_{\theta_{t+1}}(a)} &= \exp\big\{ [X \theta_{t+1}](a^*) - [X \theta_{t+1}](a) \big\} \qquad \left( \text{by \cref{eq:log_linear_policy}} \right) \\
    &= \exp\big\{ [X \theta_{t}](a^*) - [X \theta_{t}](a) + \eta \cdot \left( \hat{r}(a^*) - \hat{r}(a) \right) \big\} \qquad \left( \text{by \cref{eq:optimal_action_preserving_condition_npg_intermediate_2}} \right) \\
    &\le  \exp\big\{ [X \theta_{1}](a^*) - [X \theta_{1}](a) + \eta \cdot \left( \hat{r}(a^*) - \hat{r}(a) \right) \cdot t \big\} \\
    &\le \frac{ \pi_{\theta_{1}}(a^*) }{ \pi_{\theta_{1}}(a) }, \qquad \left( \text{using } \hat{r}(a^*) \le \hat{r}(a) \right)
\end{align}
which implies that,
\begin{align}
\label{eq:optimal_action_preserving_condition_npg_intermediate_8}
    \frac{1}{ \pi_{\theta_{t}}(a^*) } - 1 &= \sum_{a^\prime \not= a^*} \frac{ \pi_{\theta_{t}}(a^\prime)  }{ \pi_{\theta_{t}}(a^*) } \\
    &\ge \frac{ \pi_{\theta_{t}}(a) }{ \pi_{\theta_{t}}(a^*) } \qquad \left( \pi_{\theta_{t}}(a^\prime)  > 0 \text{ for all } a^\prime \in [K] \right) \\
    &\ge  \frac{ \pi_{\theta_{1}}(a) }{ \pi_{\theta_{1}}(a^*) }. \qquad \left( \text{by \cref{eq:optimal_action_preserving_condition_npg_intermediate_7}} \right)
\end{align}
Therefore, we have,
\begin{align}
\label{eq:optimal_action_preserving_condition_npg_intermediate_9}
    \left( \pi^* - \pi_{\theta_t} \right)^\top r &= \sum_{a \not= a^*}{ \pi_{\theta_t}(a) \cdot \left( r(a^*) - r(a) \right) } \\
    &\ge \Delta \cdot \left( 1 - \pi_{\theta_{t}}(a^*) \right) \qquad \Big( \Delta \coloneqq r(a^*) - \max_{a \not= a^*} r(a) \Big) \\
    &= \Delta \cdot \bigg( 1 - \frac{ 1 }{\frac{1}{ \pi_{\theta_{t}}(a^*) } - 1 + 1} \bigg) \\
    &\ge \Delta \cdot \bigg( 1 - \frac{ 1 }{ \frac{ \pi_{\theta_{1}}(a) }{ \pi_{\theta_{1}}(a^*) } + 1} \bigg) \qquad \left( \text{by \cref{eq:optimal_action_preserving_condition_npg_intermediate_8}} \right) \\
    &= \frac{\Delta \cdot \pi_{\theta_{1}}(a)}{ \pi_{\theta_{1}}(a) + \pi_{\theta_{1}}(a^*)} > 0,
\end{align}
i.e., $\pi_{\theta_t}^\top r \not\to r(a^*)$ as $t \to \infty$, which proves the necessity of the condition.
\end{proof}

\section{Miscellaneous Extra Supporting Results}
\label{sec:appendix_b}

\begin{lemma}[Smoothness]
\label{lem:smoothness_expected_reward_log_linear_policy}
Given any reward vector $r \in \sR^K$ and feature matrix $X \in \sR^{K \times d}$. The expected reward function $\theta \mapsto \pi_{\theta}^\top r$ with $\pi_{\theta} = \softmax{( X \theta ) }$ is $\beta$-smooth with 
\begin{align}
\label{eq:smoothness_expected_reward_log_linear_policy_results_1_appendix}
    \beta = \frac{9}{2} \cdot \| r \|_\infty \cdot \lambda_{\max}(X^\top X),
\end{align}
i.e., for all $\theta$, $\theta^\prime \in \sR^d$,
\begin{align}
\label{eq:smoothness_expected_reward_log_linear_policy_results_2_appendix}
    \left| ( \pi_{\theta^\prime} - \pi_\theta)^\top r - \Big\langle \frac{d \pi_\theta^\top r}{d \theta}, \theta^\prime - \theta \Big\rangle \right| \le \frac{9}{4} \cdot \| r \|_\infty \cdot \lambda_{\max}(X^\top X) \cdot \| \theta^\prime - \theta \|_2^2.
\end{align}
\end{lemma}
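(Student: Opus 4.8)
The approach is the standard route to a descent-lemma bound: show that the Hessian of $\theta\mapsto\pi_\theta^\top r$ has operator norm at most $\beta=\frac92\|r\|_\infty\lambda_{\max}(X^\top X)$ at every $\theta\in\sR^d$, which immediately gives that the gradient is $\beta$-Lipschitz (i.e.\ $\beta$-smoothness) and, via Taylor's theorem with integral remainder along the segment $\theta_s\coloneqq\theta+s(\theta'-\theta)$, $s\in[0,1]$, the inequality \cref{eq:smoothness_expected_reward_log_linear_policy_results_2_appendix}: the remainder $(\pi_{\theta'}-\pi_\theta)^\top r-\langle\frac{d\,\pi_\theta^\top r}{d\theta},\theta'-\theta\rangle$ equals $\int_0^1(1-s)\,(\theta'-\theta)^\top\nabla^2_\theta(\pi_{\theta_s}^\top r)(\theta'-\theta)\,ds$, whose modulus is at most $\beta\|\theta'-\theta\|_2^2\int_0^1(1-s)\,ds=\frac\beta2\|\theta'-\theta\|_2^2=\frac94\|r\|_\infty\lambda_{\max}(X^\top X)\|\theta'-\theta\|_2^2$.

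\textbf{Reducing to the softmax Hessian.} Setting $z\coloneqq X\theta$ and $\pi\coloneqq\softmax(z)$, the gradient is $\frac{d\,\pi_\theta^\top r}{d\theta}=X^\top(\diagonalmatrix(\pi)-\pi\pi^\top)r$ as in \cref{eq:softmax_policy_gradient}, and composing with the linear map $\theta\mapsto X\theta$ gives $\nabla^2_\theta(\pi_\theta^\top r)=X^\top H(X\theta)\,X$, where $H(z)$ denotes the Hessian of the tabular map $z\mapsto\softmax(z)^\top r$. Hence, for a unit vector $v\in\sR^d$ and $y\coloneqq Xv$ (so $\|y\|_2^2\le\lambda_{\max}(X^\top X)$),
\begin{align*}
    \bigl| v^\top \nabla^2_\theta(\pi_\theta^\top r)\, v \bigr| = \bigl| y^\top H(z)\, y \bigr| \le \lambda_{\max}(X^\top X)\cdot\|H(z)\|_2 ,
\end{align*}
so it suffices to prove the (well-known) bound $\|H(z)\|_2\le\frac92\|r\|_\infty$ for all $z\in\sR^K$; in fact a slightly smaller constant comes out, leaving room.

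\textbf{Bounding the softmax Hessian.} From $\partial\pi(a)/\partial z(b)=\pi(a)(\mathbf{1}\{a=b\}-\pi(b))$ one computes, with $\bar r\coloneqq\pi^\top r$,
\begin{align*}
    u^\top H(z)\, u = \sum_{b} u(b)^2\pi(b)\bigl(r(b)-\bar r\bigr)\;-\;2\Bigl(\sum_b u(b)\pi(b)\bigl(r(b)-\bar r\bigr)\Bigr)\Bigl(\sum_c u(c)\pi(c)\Bigr)
\end{align*}
for every unit vector $u\in\sR^K$. Each of the three factors is then controlled by elementary inequalities that use only $\|u\|_2=1$ and that $\pi$ is a probability vector: the first sum is at most $2\|r\|_\infty\sum_b u(b)^2\pi(b)\le 2\|r\|_\infty$; by Cauchy--Schwarz the middle sum is at most $\sqrt{\sum_b u(b)^2\pi(b)}\,\sqrt{\Var_\pi(r)}\le\|r\|_\infty$ since $\Var_\pi(r)\le\|r\|_\infty^2$; and $|\sum_c u(c)\pi(c)|\le 1$. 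Summing, $|u^\top H(z)\,u|\le 4\|r\|_\infty\le\frac92\|r\|_\infty$, hence $\|H(z)\|_2\le\frac92\|r\|_\infty$, and combining with the previous display yields the claimed Hessian bound and thereby the lemma.

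\textbf{Main obstacle.} Everything except the last paragraph is mechanical (the chain rule for $\nabla^2_\theta$, the integral-remainder form of Taylor's theorem). The only step requiring care is the bound on $\|H(z)\|_2$: one must expand $u^\top H(z)u$, find the grouping above, and match it to the right elementary inequalities (simplex constraints, Cauchy--Schwarz, the variance bound $\Var_\pi(r)\le\|r\|_\infty^2$) so that the total constant stays at or below $\frac92$ --- this is exactly the softmax-smoothness computation underlying \citep{mei2020global,agarwal2021theory}.
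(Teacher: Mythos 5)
Your proposal is correct and follows essentially the same route as the paper: bound the quadratic form of the Hessian (your identity for $u^\top H(z)u$ is exactly the paper's expression $\big(H(\pi_\theta)r\big)^\top (Xy\odot Xy) - 2\big(H(\pi_\theta)r\big)^\top (Xy)\,(\pi_\theta^\top Xy)$ with $u$ in place of $Xy$, reached via the chain-rule factorization $X^\top H(X\theta)X$ instead of entrywise differentiation), then conclude by Taylor's theorem. The only difference is the elementary bounding of the two terms --- your Cauchy--Schwarz/variance argument yields the slightly tighter constant $4\|r\|_\infty$ where the paper's H\"older-based bounds give $\tfrac{9}{2}\|r\|_\infty$ --- which still implies the stated lemma.
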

\begin{proof}
Let  $S \coloneqq S(X, r, \theta)\in \R^{d \times d}$ be 
the second-order derivative of the value map $\theta \mapsto \pi_\theta^\top r$.
By Taylor's theorem, it suffices to show that the spectral radius of $S$ (regardless of $\theta$) is bounded by $\beta$. Now, by its definition we have
\begin{align}
\label{eq:smoothness_expected_reward_log_linear_policy_intermediate_1}
    S &= \frac{d }{d \theta } \left\{ \frac{d \  \pi_\theta^\top r}{d \theta} \right\} \\
    &= \frac{d }{d \theta } \left\{ X^\top ( \diagonalmatrix(\pi_\theta) - \pi_\theta \pi_\theta^\top) \ r \right\}. \qquad \left( \text{by \cref{eq:softmax_policy_gradient}} \right)
\end{align}
Continuing with our calculation fix $i, j \in [d]$. Then, 
\begin{align}
\label{eq:smoothness_expected_reward_log_linear_policy_intermediate_2}
    S_{i, j} &= \frac{d \big\{ \sum_{a = 1}^{K} X_{a, i} \cdot \pi_\theta(a) \cdot  ( r(a) - \pi_\theta^\top r ) \big\} }{d \theta(j)} \\
    &= \sum_{a = 1}^{K} X_{a, i} \cdot \frac{ d \pi_\theta(a) }{d \theta(j)} \cdot \left( r(a) - \pi_\theta^\top r \right) -  \sum_{a = 1}^{K} X_{a, i} \cdot \pi_\theta(a) \cdot \sum_{a^\prime = 1}^{K} \frac{ d \pi_\theta(a^\prime) }{d \theta(j)} \cdot r(a^\prime).
\end{align}
We have, for all $a \in [K]$ and $j \in [d]$,
\begin{align}
\label{eq:smoothness_expected_reward_log_linear_policy_intermediate_3}
\MoveEqLeft
    \frac{ d \pi_\theta(a) }{d \theta(j)} = \frac{d}{d \theta(j)} \bigg\{ \frac{\exp\{ [X \theta](a) \} }{ \sum_{a^\prime \in [K]}{\exp\{ [X \theta](a^\prime) \} } } \bigg\} \\
    &= \frac{ \frac{d \ \exp\{ [X \theta](a) \}}{d \theta(j)} \cdot \sum_{a^\prime \in [K]}{\exp\{ [X \theta](a^\prime) \}} - \exp\{ [X \theta](a) \} \cdot \frac{d \ \sum_{a^\prime \in [K]}{\exp\{ [X \theta](a^\prime) \}} }{d \theta(j)} }{ \big( \sum_{a^\prime \in [K]}{\exp\{ [X \theta](a^\prime) \} } \big)^2 } \\
    &= \frac{ \exp\{ [X \theta](a) \} \cdot X_{a,j} \cdot \sum_{a^\prime \in [K]}{\exp\{ [X \theta](a^\prime) \}} - \exp\{ [X \theta](a) \} \cdot \sum_{a^\prime \in [K]}{\exp\{ [X \theta](a^\prime) \}} \cdot X_{a^\prime,j} }{ \big( \sum_{a^\prime \in [K]}{\exp\{ [X \theta](a^\prime) \} } \big)^2 } \\
    &=  \frac{ \exp\{ [X \theta](a) \} \cdot X_{a,j}  - \exp\{ [X \theta](a) \} \cdot \sum_{a^\prime \in [K]}{\pi_\theta(a^\prime) \cdot X_{a^\prime,j} } }{ \sum_{a^\prime \in [K]}{\exp\{ [X \theta](a^\prime) \} } } \\
    &= \pi_{\theta}(a) \cdot \Big( X_{a,j} - \sum_{a^\prime \in [K]}{ \pi_\theta(a^\prime) \cdot X_{a^\prime,j} } \Big).
\end{align}
Combining \cref{eq:smoothness_expected_reward_log_linear_policy_intermediate_2,eq:smoothness_expected_reward_log_linear_policy_intermediate_3}, we have,
\begin{align}
\label{eq:smoothness_expected_reward_log_linear_policy_intermediate_4}
    S_{i, j} &=  \sum_{a = 1}^{K} X_{a, i} \cdot \pi_\theta(a) \cdot  ( r(a) - \pi_\theta^\top r ) \cdot X_{a, j} - \sum_{a = 1}^{K} X_{a, i} \cdot \pi_\theta(a) \cdot  ( r(a) - \pi_\theta^\top r ) \cdot \sum_{a^\prime = 1}^{K}{ \pi_\theta(a^\prime) \cdot X_{a^\prime,j} } \\
    &\qquad -  \sum_{a = 1}^{K} X_{a, i} \cdot \pi_\theta(a) \cdot \sum_{a^\prime = 1}^{K} \pi_\theta(a^\prime) \cdot \Big( X_{a^\prime,j} - \sum_{a^{\prime\prime} = 1}^{K}{ \pi_\theta(a^{\prime\prime}) \cdot X_{a^{\prime\prime},j} } \Big) \cdot r(a^\prime).
\end{align}
To show the bound on 
the spectral radius of $S$, pick $y \in \sR^d$. Then,
\begin{align}
\label{eq:smoothness_expected_reward_log_linear_policy_intermediate_5}
\MoveEqLeft
    \left| y^\top S y \right| = \bigg| \sum\limits_{i=1}^{d}{ \sum\limits_{j=1}^{d}{ S_{i,j} \cdot y(i) \cdot y(j)} } \bigg| \\
    &= \bigg| 
    \sum_{i=1}^{d} \sum_{j=1}^{d} \sum_{a = 1}^{K} y(i) \cdot X_{a, i} \cdot \pi_\theta(a) \cdot  ( r(a) - \pi_\theta^\top r ) \cdot X_{a, j} \cdot y(j) \\
    &\qquad - \sum_{i=1}^{d} \sum_{j=1}^{d} \sum_{a = 1}^{K} y(i) \cdot X_{a, i} \cdot \pi_\theta(a) \cdot  ( r(a) - \pi_\theta^\top r ) \cdot \sum_{a^\prime = 1}^{K}{ \pi_\theta(a^\prime) \cdot X_{a^\prime,j} \cdot y(j) } \\
    &\qquad - \sum_{i=1}^{d} \sum_{j=1}^{d} \sum_{a = 1}^{K} y(i) \cdot X_{a, i} \cdot \pi_\theta(a) \cdot \sum_{a^\prime = 1}^{K} \pi_\theta(a^\prime) \cdot \Big( X_{a^\prime,j} - \sum_{a^{\prime\prime} = 1}^{K}{ \pi_\theta(a^{\prime\prime}) \cdot X_{a^{\prime\prime},j} } \Big) \cdot r(a^\prime) \cdot y(j) \bigg|,
\end{align}
which is equal to,
\begin{align}
\label{eq:smoothness_expected_reward_log_linear_policy_intermediate_6}
    \left| y^\top S y \right| &= \bigg| \sum_{a = 1}^{K} [ X y ](a) \cdot \pi_\theta(a) \cdot  ( r(a) - \pi_\theta^\top r ) \cdot [ X y ](a) \\
    &\qquad - \sum_{a = 1}^{K} [ X y ](a) \cdot \pi_\theta(a) \cdot  ( r(a) - \pi_\theta^\top r ) \cdot \sum_{a^\prime = 1}^{K} \pi_\theta(a^\prime) \cdot [ X y ](a^\prime) \\
    &\qquad - \sum_{a = 1}^{K} [ X y ](a) \cdot \pi_\theta(a) \cdot \sum_{a^\prime = 1}^{K} \pi_\theta(a^\prime) \cdot r(a^\prime) \cdot \Big( [ X y ](a^\prime) - \sum_{a^{\prime\prime} = 1}^{K} \pi_\theta(a^{\prime\prime}) \cdot [ X y ](a^{\prime\prime}) \Big) \bigg|.
\end{align} 
Denote
\begin{align}
\label{eq:smoothness_expected_reward_log_linear_policy_intermediate_7}
    H(\pi_\theta) \coloneqq \diagonalmatrix(\pi_\theta) - \pi_\theta \pi_\theta^\top \in \sR^{K \times K}.
\end{align}
We have,
\begin{align}
\label{eq:smoothness_expected_reward_log_linear_policy_intermediate_8}
    \left| y^\top S y \right| &= \bigg| \big( H(\pi_\theta) \ r \big)^\top \left( X y \odot X y \right) - \big( H(\pi_\theta) \ r \big)^\top \big( X y \big) \cdot \big( \pi_\theta^\top X y \big) - \big( \pi_\theta^\top X y \big) \cdot \big( H(\pi_\theta) X y \big)^\top r \bigg| \\
    &= \bigg| \big( H(\pi_\theta) \ r \big)^\top \left( X y \odot X y \right) - 2 \cdot \big( H(\pi_\theta) \ r \big)^\top \big( X y \big) \cdot \big( \pi_\theta^\top X y \big)  \bigg|,
\end{align}
where $\odot$ is Hadamard (component-wise) product. According to the triangle inequality and H{\" o}lder's inequality, we have,
\begin{align}
\label{eq:smoothness_expected_reward_log_linear_policy_intermediate_9}
\MoveEqLeft
    \left| y^\top S y \right| \le \Big| \big( H(\pi_\theta) \ r \big)^\top \left( X y \odot X y \right) \Big| + 2 \cdot \Big| \big( H(\pi_\theta) \ r \big)^\top \big( X y \big) \Big| \cdot \big| \pi_\theta^\top X y  \big| \\
    &\le \left\| H(\pi_\theta) r \right\|_\infty \cdot \left\| X y \odot X y \right\|_1 + 2 \cdot \left\| H(\pi_\theta) r \right\|_1 \cdot \left\| X y \right\|_\infty \cdot \left\| \pi_\theta \right\|_1 \cdot \left\| X y \right\|_\infty \\
    &= \left\| H(\pi_\theta) r \right\|_\infty \cdot \left\| X y \right\|_2^2 + 2 \cdot \left\| H(\pi_\theta) r \right\|_1 \cdot \left\| X y \right\|_\infty^2 \qquad \left( \| X y \odot X y \|_1 = \| X y \|_2^2, \ \| \pi_\theta \|_1 = 1 \right) \\
    &\le \left\| H(\pi_\theta) r \right\|_\infty \cdot \left\| X y \right\|_2^2 + 2 \cdot \left\| H(\pi_\theta) r \right\|_1 \cdot \left\| X y \right\|_2^2. \qquad \left( \| X y \|_\infty \le \| X y \|_2 \right)
\end{align}
For $a \in [K]$, denote by $H_{a,:}(\pi_\theta)$ the $a$-th row of $H(\pi_\theta)$ as a row vector. Then,
\begin{align}
\label{eq:smoothness_expected_reward_log_linear_policy_intermediate_10}
    \left\| H_{a,:}(\pi_\theta) \right\|_1 &= \pi_\theta(a) - \pi_\theta(a)^2 + \pi_\theta(a) \cdot \sum_{a^\prime \not= a}{ \pi_\theta(a^\prime) } \\
    &= \pi_\theta(a) - \pi_\theta(a)^2 +  \pi_\theta(a) \cdot ( 1 - \pi_\theta(a) ) \\
    &= 2 \cdot \pi_\theta(a) \cdot ( 1 - \pi_\theta(a) ) \\
    &\le \frac{1}{2}. \qquad\left( \text{using } x \cdot (1 - x ) \le 1/4 \text{ for all } x \in [0, 1] \right)
\end{align}
On the other hand,
\begin{align}
\label{eq:H_matrix_r_1_norm_upper_bound_special}
    \left\| H(\pi_\theta) r \right\|_1 &= \sum_{a \in [K]}{ \pi_\theta(a) \cdot \left| r(a) - \pi_\theta^\top r \right| } 
    \\
    &\le \max_{a \in [K]}{ \left| r(a) - \pi_\theta^\top r \right| } \\
    &\le 2 \cdot \| r \|_\infty. \qquad\left( \text{using } r \in \big[ - \| r \|_\infty,  \| r \|_\infty \big]^K \right)
\end{align}
Therefore, we have,
\begin{align}
\label{eq:H_matrix_maximum_eigenvalue}
\MoveEqLeft
    \left| y^\top S(X, r, \theta) \ y \right| \le \left\| H(\pi_\theta) r \right\|_\infty \cdot \left\| X y \right\|_2^2 + 2 \cdot \left\| H(\pi_\theta) r \right\|_1 \cdot \left\| X y \right\|_2^2 \\
    &= \max_{a \in [K]} \left| \left( H_{a,:}(\pi_\theta) \right)^\top r \right| \cdot \left\| X y \right\|_2^2 + 2 \cdot \left\| H(\pi_\theta) r \right\|_1 \cdot \left\| X y \right\|_2^2 \\
    &\le \max_{a \in [K]} \left\| H_{a,:}(\pi_\theta) \right\|_1 \cdot \left\| r \right\|_\infty \cdot \left\| X y \right\|_2^2 + 4 \cdot \| r \|_\infty \cdot \left\| Xy \right\|_2^2 \\
    &\le \Big( \frac{1}{2} + 4 \Big) \cdot \| r \|_\infty \cdot \left\| X y \right\|_2^2 \\
    &\le \frac{9}{2} \cdot \| r \|_\infty \cdot \left\| X \right\|_\text{op}^2 \cdot \left\| y \right\|_2^2 \\
    &= \frac{9}{2} \cdot \| r \|_\infty \cdot \lambda_{\max}(X^\top X) \cdot \left\| y \right\|_2^2,
\end{align}
where $\left\| X \right\|_\text{op}$ is the operator norm of $X \in \sR^{K \times d}$ (squared root of largest eigenvalue of $X^\top X$),
\begin{align}
\label{eq:smoothness_expected_reward_log_linear_policy_intermediate_11}
    \left\| X \right\|_\text{op} = \sup\big\{ \| X v \|_2: \| v \|_2 \le 1, \ v \in \sR^d \big\}.
\end{align}
According to Taylor's theorem, for all $ \theta, \ \theta^\prime \in \sR^d$, there exists $\theta_{\zeta} \coloneqq \zeta \cdot \theta + \left( 1 - \zeta \right) \cdot \theta^{\prime}$ with $\zeta \in [0, 1]$, such that,
\begin{align}
\label{eq:smoothness_expected_reward_log_linear_policy_intermediate_12}
    \left| ( \pi_{\theta^\prime} - \pi_\theta)^\top r - \Big\langle \frac{d \pi_\theta^\top r}{d \theta}, \theta^\prime - \theta \Big\rangle \right| &= \frac{1}{2} \cdot \left| \left( \theta^\prime - \theta \right)^\top S(X, r,\theta_\zeta) \left( \theta^\prime - \theta \right) \right| \\
    &\le \frac{9}{4} \cdot \| r \|_\infty \cdot \lambda_{\max}(X^\top X) \cdot \| \theta^\prime - \theta \|_2^2. \qedhere
\end{align}
\end{proof}

\begin{lemma}[Alternative expression of co-variance]
\label{lem:alternative_expression_covariance}
Given any vectors $x \in \sR^K$, $y \in \sR^K$, we have, for all policy $\pi \in \Delta(K)$,
\begin{align}
\label{eq:alternative_expression_covariance_results_1_appendix}
    \cov\nolimits_{\pi}{(x, y)} = \sum_{i=1}^{K-1}{ \pi(i) \cdot \sum_{j = i+1}^{K}{\pi(j) \cdot \left( x(i) - x(j) \right) \cdot \left( y(i) - y(j) \right) } }.
\end{align}
\end{lemma}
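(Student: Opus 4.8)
The plan is to use the classical ``independent copies'' identity for covariance and then reduce a symmetric double sum to its upper-triangular part. First I would expand the definition directly: for a policy $\pi \in \Delta(K)$,
\begin{align*}
    \cov\nolimits_{\pi}(x,y) = \sum_{i \in [K]} \pi(i) \, x(i) \, y(i) - \Big( \sum_{i \in [K]} \pi(i) \, x(i) \Big) \Big( \sum_{j \in [K]} \pi(j) \, y(j) \Big).
\end{align*}

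Next I would expand the candidate sum on the right-hand side of the claim, but with both indices ranging over \emph{all} ordered pairs $(i,j) \in [K] \times [K]$. Using the algebraic identity $(x(i)-x(j))(y(i)-y(j)) = x(i)y(i) - x(i)y(j) - x(j)y(i) + x(j)y(j)$ together with $\sum_{j \in [K]} \pi(j) = 1$ to collapse the two cross terms, one obtains
\begin{align*}
    \sum_{i \in [K]} \sum_{j \in [K]} \pi(i)\pi(j) \big( x(i)-x(j) \big) \big( y(i)-y(j) \big) = 2 \cdot \cov\nolimits_{\pi}(x,y).
\end{align*}

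Finally I would fold this symmetric double sum down to the upper-triangular form appearing in the statement. The diagonal terms $i = j$ contribute zero since $(x(i)-x(i))(y(i)-y(i)) = 0$, and the summand is invariant under exchanging $i$ and $j$, so the sum over the off-diagonal pairs is exactly twice the sum over pairs with $i < j$; dividing by two yields $\cov\nolimits_{\pi}(x,y) = \sum_{i=1}^{K-1} \pi(i) \sum_{j=i+1}^{K} \pi(j) \, (x(i)-x(j))(y(i)-y(j))$, which is the claim. There is no genuine obstacle in this argument; the only points requiring a little care are applying the normalization $\sum_{j} \pi(j) = 1$ correctly when collapsing the cross terms, and keeping track of the factor of two when passing from the symmetric double sum over $\{ i \neq j \}$ to the ordered pairs with $i < j$.
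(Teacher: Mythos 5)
Your proof is correct. It is the same elementary computation as the paper's, but organized in the opposite direction: the paper starts from $\cov_{\pi}(x,y) = x^\top \left( \diagonalmatrix(\pi) - \pi \pi^\top \right) y$, splits off the diagonal, uses $1 - \pi(i) = \sum_{j \neq i} \pi(j)$, and then regroups each unordered pair so that $(x(i)-x(j))(y(i)-y(j))$ appears directly, with no factor of two ever introduced; you instead expand the fully symmetric double sum over all ordered pairs $(i,j)$, identify it as $2 \cdot \cov_{\pi}(x,y)$ (the classical independent-copies identity $\E[(X-X')(Y-Y')] = 2\,\Cov(X,Y)$ with $(x(a),y(a))$, $a \sim \pi$, and an independent copy), and then halve by exploiting the symmetry of the summand and the vanishing diagonal. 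Your route has slightly cleaner bookkeeping — the normalization $\sum_j \pi(j) = 1$ enters only once, and the reduction to $i<j$ is a one-line symmetry argument — at the cost of carrying the explicit factor of two, which the paper's forward regrouping avoids. Both arguments are complete and rigorous, and either would serve as a proof of the lemma.
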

\begin{proof}
Note that, $\cov\nolimits_{\pi}{(x, y)} = x^\top \left( \diagonalmatrix(\pi) - \pi \pi^\top \right) y$. Next, we have,
\begin{align}
\label{eq:alternative_expression_covariance_intermediate_1}
\MoveEqLeft
    x^\top \left( \diagonalmatrix(\pi) - \pi \pi^\top \right) y = \sum_{i=1}^{K}{  \pi(i) \cdot x(i) \cdot y(i)  } - \sum_{i=1}^{K}{ \pi(i) \cdot y(i) } \cdot \sum_{j=1}^{K}{ \pi(j) \cdot x(j) } \\
    &= \sum_{i=1}^{K}{  \pi(i) \cdot x(i) \cdot y(i)  } - \sum_{i=1}^{K}{ \pi(i)^2 \cdot x(i) \cdot y(i)  } - \sum_{i=1}^{K}{ \pi(i) \cdot y(i) } \cdot \sum_{j \not= i}{ \pi(j) \cdot x(j) } \\
    &= \sum_{i=1}^{K}{  \pi(i) \cdot x(i) \cdot y(i) \cdot \left( 1 - \pi(i) \right) } - \sum_{i=1}^{K}{ \pi(i) \cdot y(i) } \cdot \sum_{j \not= i}{ \pi(j) \cdot x(j) } \\
    &= \sum_{i=1}^{K}{  \pi(i) \cdot x(i) \cdot y(i)  } \cdot \sum_{j \not= i}{ \pi(j) } - \sum_{i=1}^{K}{ \pi(i) \cdot y(i) } \cdot \sum_{j \not= i}{ \pi(j) \cdot x(j) } \\
    &= \sum_{i=1}^{K-1}{ \pi(i) \cdot \sum_{j = i+1}^{K}{\pi(j) \cdot \left( x(i) \cdot y(i) + x(j) \cdot y(j) \right) }  } - \sum_{i=1}^{K-1}{ \pi(i) \cdot \sum_{j = i+1}^{K}{\pi(j) \cdot \left( x(j) \cdot y(i) + x(i) \cdot y(j) \right)}  } \\
    &= \sum_{i=1}^{K-1}{ \pi(i) \cdot \sum_{j = i+1}^{K}{\pi(j) \cdot \left( x(i) - x(j) \right) \cdot \left( y(i) - y(j) \right) }  },
\end{align}
finishing the proofs.
\end{proof}

\section{Generalization to MDPs}
\label{sec:appendix_c}

We discuss some research plans for generalizing the results to MDPs, considering Softmax PG for illustration. The discussion provides some new ideas, but resolving this problem is highly non-trivial and requires further investigation. We omit the introduction of notations for general finite MDPs.

According to the policy gradient theorem \citep[Theorem 1]{sutton2000policy}, we have, for all $\theta \in \sR^d$,
\begin{align}
    \theta_{t+1} &= \theta_t + \eta \cdot \sum_{s \in \gS}{ d^{ \pi_{\theta_t} }(s) \cdot \sum_{a \in \gA}{ \frac{\partial \ \pi_{\theta_t}(a | s) }{\partial \ \theta_t} \cdot Q^{\pi_{\theta_t}}(s, a) }  } \\
    &= \theta_t + \eta \cdot \sum_{s \in \gS}{ d^{ \pi_{\theta_t} }(s) \cdot X_s^\top \left( \diagonalmatrix( \pi_{\theta_t}(\cdot | s) ) - \pi_{\theta_t}(\cdot | s) \pi_{\theta_t}(\cdot | s)^\top \right) Q^{\pi_{\theta_t}}(s, \cdot)  },
\end{align}
where $X_s \in \sR^{|\gA| \times d}$ is the feature matrix under state $s \in \gS$ and can be shared across multiple states. Comparing with \cref{eq:softmax_policy_gradient}, for all $s \in \gS$, the reward vector $r \in \sR^K$ is replaced with $Q^{\pi_{\theta_t}}(s, \cdot) \in \sR^{|\gA|}$, which provides some new ideas as well as difficulties.

The idea is that preserving the order of $Q^*(s, \cdot)$ (value of the optimal policy $\pi^*$ under state $s \in \gS$) might be enough to achieve global convergence.  Suppose that there exists $w \in \sR^d$, such that for all $s \in \gS$, $X_s w \in \sR^{|\gA|}$ preserves the order of $Q^*(s, \cdot)$. If $\softmax(X_s \theta_t)$ is close enough to $\pi^*(\cdot| s)$, such that $Q^{\pi_{\theta_t}}(s, \cdot) $ preserves the order of $Q^*(s, \cdot)$, then we have,
\begin{align}
    \theta_{t+1}^\top w &= \theta_t^\top w + \eta \cdot \sum_{s \in \gS}{ d^{ \pi_{\theta_t} }(s) \cdot w^\top X_s^\top \left( \diagonalmatrix( \pi_{\theta_t}(\cdot | s) ) - \pi_{\theta_t}(\cdot | s) \pi_{\theta_t}(\cdot | s)^\top \right) Q^{\pi_{\theta_t}}(s, \cdot)  } \\
    &\ge \theta_t^\top w,
\end{align}
which is similar to and generalizes \cref{eq:monotonicity_w_inner_pproduct_theta_t_softmax_pg}. If one can show that $\theta_t$ approaches $w$ in direction, then $\pi_{\theta_t}(a^*(s) | s ) = \softmax(X_s \theta_t)(a^*(s)) \to \pi^*(a^*(s) | s ) = 1$. This means that preserving the order of $Q^*(s, \cdot)$ could be enough for $\pi^*$ to be a local attractor for Softmax PG updates. One challenge is to generalize the arguments for arbitrary initialization $\theta_1 \in \sR^d$ rather than $\theta_t$ being close enough to optimal solution, and the difficulty is that $Q^{\pi_{\theta_t}}(s, \cdot)$ does not necessarily preserve the order of $Q^*(s, \cdot)$, and the above inequality does not necessarily hold.


\end{document}